\documentclass[11pt]{article}

\usepackage[T1]{fontenc}
\usepackage[utf8]{inputenc}
\usepackage{amsmath,amssymb,amsfonts,amsthm,mathtools}
\usepackage{bm}
\usepackage{booktabs}
\usepackage{graphicx}
\usepackage[margin=1in]{geometry}
\usepackage[hidelinks]{hyperref}

\newcommand{\R}{\mathbb{R}}

\newcommand{\V}{\mathcal V}
\newcommand{\E}{\mathcal E}

\newcommand{\lambdaup}{\overline{\lambda}}
\newcommand{\lambdalow}{\underline{\lambda}}

\DeclareMathOperator{\diag}{diag}
\DeclareMathOperator{\softplus}{softplus}

\newcommand{\alttext}[1]{}

\newtheorem{theorem}{Theorem}[section]
\newtheorem{proposition}[theorem]{Proposition}
\newtheorem{corollary}[theorem]{Corollary}
\theoremstyle{remark}

\numberwithin{equation}{section}

\title{Cone Extended Rayleigh Quotients for Directed Graph Learning:\\
Minimax Spectral Certificates, Sensitivity, and Adaptive Control}

\author{Yavdat S. Il'yasov\thanks{Institute of Mathematics with Computing Centre,
Ufa Federal Research Centre of the Russian Academy of Sciences,
112 Chernyshevsky St., Ufa 450008, Russia. Corresponding author:
\href{mailto:ilyasov02@gmail.com}{ilyasov02@gmail.com}.}
\and Nur F. Valeev\thanks{Institute of Mathematics with Computing Centre,
Ufa Federal Research Centre of the Russian Academy of Sciences,
112 Chernyshevsky St., Ufa 450008, Russia.}}
\date{}

\begin{document}
\maketitle

\begin{abstract}
Directed graph learning naturally leads to trainable nonsymmetric
propagation operators with distinct right and left spectral structures.
Building on the two-sided cone Rayleigh framework for generalized pencils
$B_\theta-\lambda G$, we develop a learning-oriented methodology for
certification, sensitivity analysis, and control without requiring symmetry,
nonnegativity, or cone preservation. In the positive-orthant setting,
computable lower and upper cone bounds provide an a posteriori enclosure of
a distinguished cone level, while smooth soft-min/max surrogates retain
rigorous one-sided bounds with explicit approximation errors and remain
differentiable with respect to the trainable parameters. For a simple
interior level, the right and left modes satisfy
$D\lambda_C(B)[H]=v_C^THu_C$, yielding first-order optimal graph-supported
interventions under prescribed perturbation budgets and motivating adaptive
spectral control. Numerical experiments validate the framework beyond
cone-preserving operators and in directed learning. Signed nonsymmetric
perturbations exhibit the transition from interior eigenpairs to boundary
complementary quasi-pairs, including non-spectral cone levels, while
controlled experiments show that symmetrization can remove predictive
information carried solely by edge direction. On the directed Cora citation
network, adaptive recomputation of the right--left sensitivity reduces the
distinguished spectral level by approximately $21.5\%$ under a cumulative
edge-weight budget of $0.5\%$, with no observed change in test accuracy for
the trained model and split considered.
\end{abstract}

\medskip
\noindent\textbf{Keywords.}
Directed graph learning; extended Rayleigh quotient; minimax principle;
matrix pencil; spectral certificate; adaptive spectral control.

\medskip
\noindent\textbf{2020 Mathematics Subject Classification.}
15A18, 15A22, 05C50, 68T07.
\medskip

\section{Introduction and related work}
\label{sec:introduction}

\subsection{Motivation and directed spectral learning}
\label{subsec:intro-motivation}

Many learning problems are naturally posed on directed relational data,
including citation, communication, transaction, recommendation, and
information-flow networks.  In such settings, edge direction is part of the
observed structure and may itself carry predictive information.
Symmetrization may therefore alter both the propagation mechanism and the
information available for inference.

For undirected graphs, symmetric adjacency or Laplacian operators allow the
use of classical Rayleigh--Ritz theory.  Directed propagation operators are
generally nonsymmetric and may be strongly nonnormal.  Their right and left
spectral modes are distinct, while the quadratic Rayleigh quotient
\[
\frac{u^TBu}{u^Tu}
\]
does not retain this two-sided structure.

Several approaches preserve or encode graph direction through directed
Laplacians, directed convolutions, magnetic operators, or separate incoming
and outgoing propagation
\cite{MaEtAl2019,TongEtAl2020,ZhangMagNet2021,
	PerlmutterEtAl2023,RossiEtAl2024}.
Rayleigh-quotient terms have also been incorporated into graph-learning
objectives \cite{DongZhangWangRQGNN}.
Our approach is complementary: rather than replacing the directed operator
by a symmetric or Hermitian surrogate, we work directly with the original
nonsymmetric propagation operator through a two-sided cone Rayleigh
framework.

Let
\[
B_\theta=B_\theta(A,X)\in\R^{N\times N}
\]
be a trainable directed propagation operator, and more generally consider
the pencil
\begin{equation}
	\label{eq:intro-pencil}
	B_\theta-\lambda G.
\end{equation}
This formulation includes weighted and normalized spectral problems without
requiring the explicit formation of $G^{-1}B_\theta$, and remains meaningful
when $G$ is singular.  The general framework does not require $B_\theta$ to
be positive, symmetric, or cone preserving; positivity enters only in
particular computational realizations considered below.

\subsection{Two-sided cone Rayleigh formulation}
\label{subsec:intro-spectral-graph-learning}

For a symmetric matrix, the classical Rayleigh quotient underlies standard
spectral graph methods
\cite{ShiMalik2000,vonLuxburg2007,KipfWelling2017}.
For a nonsymmetric pencil, we instead use the two-variable functional
\begin{equation}
	\label{eq:intro-extended-RQ}
	R_\theta(u,v)
	=
	\frac{\langle B_\theta u,v\rangle}
	{\langle Gu,v\rangle},
	\qquad
	\langle Gu,v\rangle\neq0.
\end{equation}
The variables $u$ and $v$ play distinct roles.  Right and left generalized
eigenvectors satisfy
\[
B_\theta u_*=\lambda_*Gu_*,
\qquad
B_\theta^Tv_*=\lambda_*G^Tv_*.
\]
If $\lambda_*$ is simple and $v_*^TGu_*=1$, then, with $G$ fixed,
\begin{equation}
	\label{eq:intro-sensitivity}
	D\lambda_*(B_\theta)[H]
	=
	v_*^THu_*,
\end{equation}
and therefore
\begin{equation}
	\label{eq:intro-entry-sensitivity}
	\frac{\partial\lambda_*}{\partial b_{ij}}
	=
	(v_*)_i(u_*)_j.
\end{equation}
Thus the right mode represents directed propagation, whereas the left mode
encodes its dual sensitivity.

Let $\overline C\subset\R^N$ be an admissible cone,
$C=\overline C\setminus\{0\}$, and
$C^\circ=\operatorname{int}\overline C$.  The associated cone levels are
\begin{equation}
	\label{eq:intro-minimax-levels}
	\sup_{u\in C}\inf_{v\in C^\circ}R_\theta(u,v),
	\qquad
	\inf_{v\in C}\sup_{u\in C^\circ}R_\theta(u,v).
\end{equation}
When the two levels coincide and the corresponding quasi-eigenvectors are
interior, they yield a genuine right--left generalized eigenpair.  At the
cone boundary, however, a common variational level may persist without
belonging to the ordinary spectrum; the corresponding modes then satisfy
complementarity relations rather than eigenvalue equations.

\subsection{Relation to Perron--Frobenius and extended Rayleigh theory}
\label{subsec:intro-PF-extended-RQ}

The construction is related to the Collatz--Wielandt and
Perron--Frobenius theories.  For irreducible essentially positive matrices,
one has the classical two-sided formula
\[
\lambda_*(A)
=
\sup_{u>0}\inf_{v>0}
\frac{\langle Au,v\rangle}{\langle u,v\rangle}
=
\inf_{v>0}\sup_{u>0}
\frac{\langle Au,v\rangle}{\langle u,v\rangle};
\]
see \cite{Berman,BirkhoffVarga1958,Friedland1990}.

A second line of development, leading to the present framework, arose in
bifurcation theory.  The minimax bifurcation formula introduced by Il'yasov
in \cite{Ilyasov2001,IlyasFunc2007} has the basic variational structure
\begin{equation}
	\label{eq:minimax-bifurcation-formula}
	\lambda_*
	:=
	\sup_{u\in\mathcal U}
	\inf_{v\in\mathcal S}
	\mathcal R(u,v).
\end{equation}
It was introduced to characterize critical parameter values directly
through a variational principle; see also \cite{Ilyasov2026}.

Numerical realizations of this approach were developed in
\cite{IvanIlya,IlIvan1}, with applications to power-system bifurcations
\cite{SalazarIlaysov2020}, nonlinear generalized Collatz--Wielandt formulas
\cite{ilyasov2021}, and systems of differential equations
\cite{IlyasJDE24}.

The corresponding matrix theory for arbitrary real matrices and general
cones was established in \cite{IlyasovValeev2024}, without requiring
positivity of the off-diagonal entries, irreducibility, or cone preservation.
It was subsequently extended to generalized non-selfadjoint operator pencils
in \cite{IlyasovValeevPencils2026}.  The present work builds on this
variational foundation but addresses a different problem: \emph{trainable}
directed operators and their use in certified learning, sensitivity analysis,
and spectral control.

\subsection{From certification to adaptive control}
\label{subsec:intro-two-sided}

For the positive-orthant realization with positive diagonal $G$, approximate
right and left modes yield the computable bounds
\[
\underline{\lambda}_\theta(u)
=
\min_i\frac{(B_\theta u)_i}{(Gu)_i},
\qquad
\overline{\lambda}_\theta(v)
=
\max_i\frac{(B_\theta^Tv)_i}{(G^Tv)_i}.
\]
If the two cone levels coincide at $\lambda_C$, then
\begin{equation}
	\label{eq:intro-certificate}
	\underline{\lambda}_\theta(u)
	\leq
	\lambda_C
	\leq
	\overline{\lambda}_\theta(v).
\end{equation}
Thus approximate modes provide an a posteriori enclosure of the distinguished
cone level rather than merely an approximate value.  Smooth soft-min/max
surrogates retain explicit approximation bounds while remaining
differentiable with respect to the trainable parameters.

The same right--left pair determines the local spectral sensitivity.  Under
the normalization $v_C^TGu_C=1$,
\[
D\lambda_C(B_\theta)[H]
=
v_C^THu_C.
\]
The spectral modes therefore induce a control map: graph interactions can be
ranked according to their first-order influence on $\lambda_C$, and the
ranking can be recomputed after each modification.  This yields the central
mechanism studied in the paper,
\[
\boxed{
	\text{two-sided certification}
	\longrightarrow
	\text{right--left sensitivity}
	\longrightarrow
	\text{adaptive spectral control}.
}
\]

\subsection{Main contributions}
\label{subsec:intro-contributions}

The main contributions of this work are as follows.

\begin{enumerate}
	
	\item[(i)]
	We bring the two-sided cone framework to \emph{trainable} directed
	propagation operators $B_\theta(A,X)$ and generalized nonsymmetric
	pencils.  This formulation accommodates both interior generalized
	eigenpairs and boundary complementary quasi-pairs whose common cone level
	need not belong to the ordinary spectrum.
	
	\item[(ii)]
	We construct computable a posteriori lower and upper enclosures for the
	distinguished cone level, together with differentiable soft surrogates
	having explicit approximation errors.  For graph-sparse operators, their
	evaluation requires essentially only the products $B_\theta u$ and
	$B_\theta^Tv$.
	
	\item[(iii)]
	We connect two-sided certification with spectral sensitivity.  For a
	simple interior level,
	\[
	D\lambda_C(B_\theta)[H]=v_C^THu_C,
	\]
	and derive first-order optimal graph-supported interventions, including
	$L^1$-budget edge modifications.  This gives a variational justification
	for sensitivity-based edge ranking and for its adaptive recomputation
	after successive interventions.
	
	\item[(iv)]
	We integrate certification and control into directed node classification.
	On synthetic directed graphs, a prescribed spectral cap is certified
	across all realizations while requiring only small modifications of the
	learned operator and producing no statistically significant change in
	classification accuracy.  A controlled direction-only experiment further
	shows that symmetrization may remove predictive information carried by
	edge orientation.
	
	\item[(v)]
	Numerical perturbation experiments validate both boundary quasi-pairs and
	the right--left sensitivity formula.  On the directed Cora citation
	network, adaptive sensitivity recomputation reduces the distinguished
	spectral level by approximately $21.5\%$ under a cumulative edge-weight
	budget of $0.5\%$, while preserving the observed test accuracy for the
	trained model and split considered.
	
\end{enumerate}

Thus the right and left modes play three complementary roles: they
characterize the directed cone level, certify it a posteriori, and generate
the sensitivity map used for adaptive control.

\subsection{Organization of the paper}
\label{subsec:intro-organization}

Section~\ref{sec:directed-learning-pencil} introduces the directed learning
model and generalized pencil.
Section~\ref{sec:cone-RQ-minimax} recalls the two-sided cone framework.
Section~\ref{sec:bounds-certificates-stability} develops differentiable
certificates and sensitivity formulas.
Sections~\ref{sec:certified-directed-learning} and
\ref{sec:algorithms-complexity} describe certified learning, spectral control,
and computational algorithms.
Section~\ref{sec:experiments} presents the numerical experiments, and
Section~\ref{sec:discussion-conclusion} concludes the paper.

%%%%%%%%%%%%%%%%%%%%%%%%%%%%%%%%%%%%%%%%%%%%%%%%%%%%%%%%%%%%%%%%%%%%%%%%%%%%%%%%%%%%%%%%%%%%%%%	

% ================================================================
\section{Directed learning and the spectral control problem}
\label{sec:directed-learning-pencil}
% ================================================================

\subsection{Trainable directed propagation}
\label{subsec:directed-attributed-graphs}

Let
\[
\Gamma=(\V,\E,A),
\qquad
\V=\{1,\ldots,N\},
\]
be a weighted directed graph with adjacency matrix
$A=(a_{ij})\in\R^{N\times N}$.  We adopt the convention
\begin{equation}
	\label{eq:edge-convention}
	a_{ij}>0
	\quad\Longleftrightarrow\quad
	j\to i,
\end{equation}
so that
\[
(Au)_i
=
\sum_{j:\,j\to i}a_{ij}u_j.
\]
The directionality $A\neq A^T$ is retained throughout.

Node features $x_i\in\R^d$ are collected in
$X\in\R^{N\times d}$.  Directed propagation is represented by a trainable
operator
\[
B_\theta=B_\theta(A,X)\in\R^{N\times N},
\]
acting according to
\begin{equation}
	\label{eq:B-action}
	(B_\theta u)_i
	=
	\sum_{j=1}^N b_{ij}(\theta)u_j.
\end{equation}
A typical graph-supported parametrization is
\begin{equation}
	\label{eq:edge-parametrization}
	b_{ij}(\theta)
	=
	a_{ij}w_\theta(x_i,x_j),
	\qquad i\neq j,
\end{equation}
with diagonal terms fixed or trainable.  For sparse graphs we impose
\begin{equation}
	\label{eq:sparsity-support}
	a_{ij}=0
	\quad\Longrightarrow\quad
	b_{ij}(\theta)=0,
	\qquad i\neq j,
\end{equation}
so that the products $B_\theta u$ and $B_\theta^Tv$ can be evaluated
without forming dense matrices.

The general framework requires neither symmetry nor positivity of
$B_\theta$, and does not assume that $B_\theta$ preserves the cone used
below.

\subsection{Learning with spectral control}
\label{subsec:node-labels}

We consider semi-supervised node classification with labelled training set
$\V_{\rm tr}\subset\V$ and task loss
\[
\mathcal L_{\rm task}(\theta)
=
-\frac{1}{|\V_{\rm tr}|}
\sum_{i\in\V_{\rm tr}}
\log p_{i\,y_i}(\theta).
\]
The particular classifier architecture is secondary here; the principal
object is the trainable directed propagation operator $B_\theta$.

We associate with it the generalized pencil
\begin{equation}
	\label{eq:matrix-pencil}
	\mathcal P_\theta(\lambda)
	=
	B_\theta-\lambda G,
	\qquad
	B_\theta,G\in\R^{N\times N},
\end{equation}
and the corresponding right and left generalized eigenproblems
\begin{equation}
	\label{eq:right-generalized-eigenproblem}
	B_\theta u=\lambda Gu,
	\qquad
	B_\theta^Tv=\lambda G^Tv.
\end{equation}
The matrix $G$ specifies the reference metric or normalization; $G=I$
recovers the ordinary eigenvalue problem.

The learning problem with cone-level control takes the form
\begin{equation}
	\label{eq:learning-spectral-control-problem}
	\min_\theta\,
	\mathcal L_{\rm task}(\theta)
	\qquad
	\text{subject to}
	\qquad
	\lambda_C(B_\theta,G)\leq\Lambda,
\end{equation}
where $\lambda_C(B_\theta,G)$ denotes the distinguished cone level introduced
in Section~\ref{sec:cone-RQ-minimax}.  In the interior regime this level is a
generalized eigenvalue, whereas at the cone boundary it may instead be
quasi-spectral.

Since $B_\theta$ is generally nonsymmetric, its right and left modes are
distinct.  Rather than relying on repeated eigensolvers during training, we
seek two-sided variational quantities that provide computable a posteriori
enclosures and remain compatible with gradient-based optimization.

\subsection{Generalized normalization and right--left modes}
\label{subsec:generalized-spectral-problem}

The pencil formulation avoids explicit inversion of $G$ and remains
meaningful even when $G$ is singular.  Typical choices for directed graphs
include
\[
G=I,
\qquad
G=D_{\rm in},
\qquad
G=D_{\rm out},
\]
or, more generally,
\[
G=\operatorname{diag}(g_1,\ldots,g_N),
\qquad
g_i>0,
\]
where, under the convention \eqref{eq:edge-convention},
\[
d_i^{\rm in}
=
\sum_j a_{ij},
\qquad
d_i^{\rm out}
=
\sum_j a_{ji}.
\]

For a nonsymmetric pencil, the right and left modes in
\eqref{eq:right-generalized-eigenproblem} are generally distinct.  Their
interaction is encoded by the two-sided Rayleigh functional
\[
R_\theta(u,v)
=
\frac{\langle B_\theta u,v\rangle}
{\langle Gu,v\rangle},
\qquad
\langle Gu,v\rangle\neq0.
\]
This functional provides the variational basis for the cone minimax levels,
a posteriori enclosures, and right--left sensitivity analysis developed in
the following sections.

%%%%%%%%%%%%%%%%%%%%%%%%%%%%%%%%%%%%%%%%%%%%%%%%%%%%%%%%%%%%%%%%%%%%%%%%%%%%%%%%%%%%%%%%%%%%

% ================================================================
\section{Two-sided cone spectral framework}
\label{sec:cone-RQ-minimax}
% ================================================================

The cone minimax framework was developed for arbitrary real matrices in
\cite{IlyasovValeev2024} and subsequently extended to generalized
non-selfadjoint operator pencils in \cite{IlyasovValeevPencils2026}.
Here we recall only the finite-dimensional form needed for trainable
operators $B_\theta(A,X)$; the learning-specific developments begin in
Section~\ref{sec:bounds-certificates-stability}.

\subsection{Cone extended Rayleigh quotient}
\label{subsec:cone-extended-RQ}

Let $\overline C\subset\R^N$ be a closed, convex, solid, self-dual cone, and set
\[
C:=\overline C\setminus\{0\},
\qquad
C^\circ:=\operatorname{int}\overline C.
\]
The principal computational example is
$\overline C=\R_+^N$.  No cone-preservation assumption is imposed on
$B_\theta$.

Assume that
\begin{equation}
	\label{eq:G-admissibility}
	\langle Gu,v\rangle>0
	\quad\text{for}\quad
	(u,v)\in
	(C\times C^\circ)\cup(C^\circ\times C).
\end{equation}
This condition includes $G=I$ and, for $\overline C=\R_+^N$, every positive
diagonal $G$; in particular, invertibility of $G$ is not required.

Define the cone extended Rayleigh quotient
\begin{equation}
	\label{eq:cone-extended-RQ}
	R_\theta(u,v)
	:=
	\frac{\langle B_\theta u,v\rangle}
	{\langle Gu,v\rangle}.
\end{equation}
It is homogeneous of degree zero in each variable separately.  The associated
upper and lower cone levels are
\begin{align}
	\lambdaup_C(B_\theta,G)
	&:=
	\sup_{u\in C}
	\inf_{v\in C^\circ}
	R_\theta(u,v),
	\label{eq:upper-quasi-eigenvalue}
	\\
	\lambdalow_C(B_\theta,G)
	&:=
	\inf_{v\in C}
	\sup_{u\in C^\circ}
	R_\theta(u,v).
	\label{eq:lower-quasi-eigenvalue}
\end{align}
When the outer extrema are attained, their extremizers are called,
respectively, right and left quasi-eigenvectors.

\subsection{Cone minimax principle}
\label{subsec:main-minimax-theorem}

\begin{theorem}[Cone minimax principle]
	\label{thm:main-minimax}
	Under \eqref{eq:G-admissibility},
	\begin{align}
		\sup_{u\in C}\inf_{v\in C^\circ}R_\theta(u,v)
		&=
		\inf_{v\in C^\circ}\sup_{u\in C}R_\theta(u,v),
		\label{eq:main-minimax-upper}
		\\
		\sup_{u\in C^\circ}\inf_{v\in C}R_\theta(u,v)
		&=
		\inf_{v\in C}\sup_{u\in C^\circ}R_\theta(u,v).
		\label{eq:main-minimax-lower}
	\end{align}
	The corresponding outer extrema are attained by some
	$u_C,v_C\in C$.
	
	If $u_C,v_C\in C^\circ$, then
	\begin{equation}
		\label{eq:upper-lower-coincide}
		\lambdaup_C(B_\theta,G)
		=
		\lambdalow_C(B_\theta,G)
		=:\lambda_C,
	\end{equation}
	and
	\begin{equation}
		\label{eq:pencil-right-left-eigenvectors}
		B_\theta u_C=\lambda_CGu_C,
		\qquad
		B_\theta^Tv_C=\lambda_CG^Tv_C.
	\end{equation}
\end{theorem}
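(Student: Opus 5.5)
The plan is to reduce both identities \eqref{eq:main-minimax-upper}--\eqref{eq:main-minimax-lower} to explicit dual descriptions of the inner extrema, and then to use a compactness/continuity argument for attainment.

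\emph{Step 1: the inner infimum.} Fix $u\in C$. By \eqref{eq:G-admissibility} the denominator $\langle Gu,v\rangle$ is positive for all $v\in C^\circ$. Hence $\inf_{v\in C^\circ}R_\theta(u,v)\ge\mu$ holds if and only if
\[
\langle (B_\theta-\mu G)u,v\rangle\ge0 \quad\text{for all } v\in C^\circ .
\]
By density of $C^\circ$ in $\overline C$ and self-duality, this is equivalent to $(B_\theta-\mu G)u\in\overline C$. Therefore
\[
\inf_{v\in C^\circ}R_\theta(u,v)=\sup\{\mu:(B_\theta-\mu G)u\in\overline C\}=:r(u),
\]
which is a Collatz--Wielandt-type functional. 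Symmetrically, $\sup_{u\in C^\circ}R_\theta(u,v)=\inf\{\mu:(B_\theta^T-\mu G^T)v\in -\overline C\}$.

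\emph{Step 2: the minimax identities.} The inequality ``$\le$'' in \eqref{eq:main-minimax-upper} is the trivial weak minimax inequality. For ``$\ge$'', I will argue by contradiction. Suppose $\sup_u r(u)<\mu<\inf_{v\in C^\circ}\sup_{u\in C}R_\theta(u,v)$. Then no $u\in C$ satisfies $(B_\theta-\mu G)u\in\overline C$, that is,
\[
(B_\theta-\mu G)(\overline C)\cap\overline C=\{0\}\quad\text{on } C .
\]
Normalize $u$ to the compact base $K=\{u\in\overline C:\langle u,e\rangle=1\}$ with $e\in C^\circ$. The image $(B_\theta-\mu G)K$ is then a compact convex set disjoint from $\overline C$. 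A separation theorem, combined with self-duality, produces $v\in\overline C$, $v\ne0$, with $\langle(B_\theta-\mu G)u,v\rangle<0$ for all $u\in K$. Perturbing $v$ slightly into $C^\circ$ keeps this strict inequality by compactness of $K$. This gives $\sup_{u\in C}R_\theta(u,v)\le\mu$ for some $v\in C^\circ$, a contradiction. The identity \eqref{eq:main-minimax-lower} is proved the same way with the roles of $(B_\theta,G,u)$ and $(B_\theta^T,G^T,v)$ interchanged.

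\emph{Step 3: attainment.} The function $r$ is upper semicontinuous on $K$, since its superlevel set $\{u:(B_\theta-\mu G)u\in\overline C\}\cap K$ is closed. So $\sup_K r$ is attained at some $u_C$, provided $\sup_K r>-\infty$. I expect this finiteness issue to be the main obstacle. With $u\in C^\circ$, or using $\langle Gu,v\rangle>0$ for $u\in C$, $v\in C^\circ$, I will show that $r$ is finite somewhere. Upper boundedness of $r$ comes from testing against a fixed $v\in C^\circ$, which gives $r(u)\le R_\theta(u,v)$, and the latter is bounded on $K$ by compactness together with the denominator's positivity. The analogous argument gives $v_C$.

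\emph{Step 4: the interior case.} Assume $u_C,v_C\in C^\circ$. Write $\bar\lambda=r(u_C)$, so $w:=(B_\theta-\bar\lambda G)u_C\in\overline C$, and similarly $z:=(B_\theta^T-\underline\lambda G^T)v_C\in-\overline C$. Pairing gives
\[
\langle w,v_C\rangle=\langle B_\theta u_C,v_C\rangle-\bar\lambda\langle Gu_C,v_C\rangle\ge0,
\qquad
\langle u_C,z\rangle=\langle B_\theta u_C,v_C\rangle-\underline\lambda\langle Gu_C,v_C\rangle\le0 .
\]
Dividing by $\langle Gu_C,v_C\rangle>0$ yields $\bar\lambda\le R_\theta(u_C,v_C)\le\underline\lambda$. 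The reverse inequality $\underline\lambda\le\bar\lambda$ comes from the minimax identities, which show $\underline\lambda=\sup_{C^\circ}\inf_C R_\theta\le\sup_C\inf_{C^\circ}R_\theta=\bar\lambda$, using $C^\circ\subset C$. Hence $\bar\lambda=\underline\lambda=\lambda_C$, and both displayed pairings vanish. Finally, $\langle w,v_C\rangle=0$ with $w\in\overline C$ and $v_C\in C^\circ$ forces $w=0$, because an interior point of a self-dual cone pairs strictly positively with every nonzero element. Likewise $z=0$, which gives \eqref{eq:pencil-right-left-eigenvectors}.
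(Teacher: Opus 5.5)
Your proof is correct, but it takes a genuinely different route from the paper's sketch.

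\textbf{The paper's route.} The paper normalizes to the compact section $\mathcal B$ and its interior part $\mathcal B^\circ$. It notes that $R_\theta$ is linear-fractional with positive denominator on the mixed domains, hence quasi-concave and quasi-convex in each variable, and invokes Sion's minimax theorem as a black box. In the interior case it obtains the eigen-equations from $\nabla_uR_\theta=\nabla_vR_\theta=0$ at an interior saddle point.

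\textbf{Your route for the minimax identities.} You rewrite the inner extremum in Collatz--Wielandt form, $\inf_{v\in C^\circ}R_\theta(u,v)=\sup\{\mu:(B_\theta-\mu G)u\in\overline C\}$. You then prove the reverse minimax inequality directly by a theorem of the alternative: strict separation of the compact convex set $(B_\theta-\mu G)K$ from $\overline C$, self-duality to place the separating vector in $\overline C$, and a small perturbation into $C^\circ$ to match the outer domain. This is more elementary and self-contained. It also produces the general-cone analogue of \eqref{eq:upper-CW-general}--\eqref{eq:lower-CW-general} along the way. Sion's route is shorter, but it hides the work in checking its hypotheses on the non-compact convex set $\mathcal B^\circ$.

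\textbf{Your Step~4.} This step gives more than the gradient argument. Once $\lambdaup_C=\lambdalow_C$, the quantities $\langle w,v_C\rangle$ and $\langle u_C,z\rangle$ are the same expression, so $\langle w,v_C\rangle\ge0\ge\langle u_C,z\rangle$ forces both to vanish. That is exactly the complementarity relations of Section~\ref{subsec:boundary-quasi-pairs}, obtained with no interiority assumption. Interiority of $v_C$ (resp.\ $u_C$) is then needed only to conclude $w=0$ (resp.\ $z=0$). The paper's first-order argument, by contrast, needs both modes interior from the start.

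\textbf{One small fix in Step~3.} The half of \eqref{eq:G-admissibility} you mention ($u\in C$, $v\in C^\circ$) only gives $Gu\in\overline C$. That does not rule out $r(u)=-\infty$ on $\partial\overline C$. Use the other half ($u\in C^\circ$, $v\in C$) instead. It gives $Gu\in C^\circ$, hence $(B_\theta-\mu G)u\in\overline C$ for all sufficiently negative $\mu$, and therefore $\sup_K r>-\infty$.
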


\begin{proof}[Proof sketch]
	Choose $e\in C^\circ$ and introduce the normalized cone section
	\[
	\mathcal B
	=
	\{x\in\overline C:\langle x,e\rangle=1\},
	\qquad
	\mathcal B^\circ
	=
	\mathcal B\cap C^\circ.
	\]
	Since $\overline C$ is closed, solid, and self-dual,
	$\mathcal B$ is compact and convex.  By the separate homogeneity of
	$R_\theta$, the minimax problems may be restricted to
	$\mathcal B$ and $\mathcal B^\circ$.
	
	Condition \eqref{eq:G-admissibility} guarantees positivity of the
	denominator on the relevant mixed domains.  For either variable fixed,
	$R_\theta$ is linear-fractional in the other and hence both quasiconvex
	and quasiconcave.  Sion's minimax theorem \cite{Sion1958} therefore gives
	\eqref{eq:main-minimax-upper} and
	\eqref{eq:main-minimax-lower}.  Moreover,
	\[
	u\longmapsto
	\inf_{v\in\mathcal B^\circ}R_\theta(u,v)
	\]
	is upper semicontinuous on the compact set $\mathcal B$, whereas
	\[
	v\longmapsto
	\sup_{u\in\mathcal B^\circ}R_\theta(u,v)
	\]
	is lower semicontinuous there.  Hence the corresponding outer extrema
	are attained.
	
	Since $C^\circ\subset C$, the two minimax identities imply
	\[
	\lambdalow_C(B_\theta,G)
	\leq
	\lambdaup_C(B_\theta,G).
	\]
	If $u_C,v_C\in C^\circ$, then
	\[
	\lambdaup_C(B_\theta,G)
	\leq
	R_\theta(u_C,v_C)
	\leq
	\lambdalow_C(B_\theta,G),
	\]
	and therefore all three quantities coincide.  Thus $v_C$ minimizes
	$R_\theta(u_C,\cdot)$ and $u_C$ maximizes
	$R_\theta(\cdot,v_C)$.  Since both extremizers are interior, the
	first-order conditions yield
	\[
	\nabla_vR_\theta(u_C,v_C)
	=
	\frac{
		B_\theta u_C-\lambda_CGu_C}
	{\langle Gu_C,v_C\rangle}
	=0
	\]
	and
	\[
	\nabla_uR_\theta(u_C,v_C)
	=
	\frac{
		B_\theta^Tv_C-\lambda_CG^Tv_C}
	{\langle Gu_C,v_C\rangle}
	=0.
	\]
	This proves \eqref{eq:pencil-right-left-eigenvectors}.
\end{proof}

Theorem~\ref{thm:main-minimax} is the finite-dimensional specialization of
the cone framework developed in
\cite{IlyasovValeev2024,IlyasovValeevPencils2026}.  The proof sketch is
included only to expose the minimax mechanism used below; the general
operator-pencil theory and its extensions are given in the cited works.
Here the theorem provides the variational basis for two-sided certification,
differentiable learning constraints, and adaptive spectral control.

\subsection{Interior modes and boundary quasi-pairs}
\label{subsec:boundary-quasi-pairs}

Suppose that
\[
\lambdaup_C(B_\theta,G)
=
\lambdalow_C(B_\theta,G)
=
\lambda_C.
\]
The corresponding quasi-pair satisfies
\[
r_C
:=
B_\theta u_C-\lambda_CGu_C
\in\overline C,
\qquad
s_C
:=
\lambda_CG^Tv_C-B_\theta^Tv_C
\in\overline C,
\]
together with the complementarity relations
\[
\langle r_C,v_C\rangle
=
\langle s_C,u_C\rangle
=
0.
\]
If $v_C\in C^\circ$, then $r_C=0$, whereas
$u_C\in C^\circ$ implies $s_C=0$.  Consequently, a common cone level that
does not belong to the generalized spectrum can occur only when
\[
u_C,v_C\in\partial\overline C.
\]
Such boundary complementary quasi-pairs occur in the signed-perturbation
experiments of Section~\ref{sec:experiments}.

\subsection{Positive orthant and computable bounds}
\label{subsec:positive-orthant-formulas}

For the computational realization we take
\[
\overline C=\R_+^N,
\qquad
G=\diag(g_1,\ldots,g_N),
\qquad
g_i>0.
\]
For $u,v>0$,
\begin{align}
	\inf_{\widetilde v>0}R_\theta(u,\widetilde v)
	&=
	\min_i
	\frac{(B_\theta u)_i}{(Gu)_i},
	\label{eq:positive-right-formula}
	\\
	\sup_{\widetilde u>0}R_\theta(\widetilde u,v)
	&=
	\max_i
	\frac{(B_\theta^Tv)_i}{(G^Tv)_i},
	\label{eq:positive-left-formula}
\end{align}
and hence
\begin{align}
	\lambdaup_C(B_\theta,G)
	&=
	\sup_{u>0}
	\min_i
	\frac{(B_\theta u)_i}{(Gu)_i},
	\label{eq:upper-CW-general}
	\\
	\lambdalow_C(B_\theta,G)
	&=
	\inf_{v>0}
	\max_i
	\frac{(B_\theta^Tv)_i}{(G^Tv)_i}.
	\label{eq:lower-CW-general}
\end{align}
These are weighted-pencil analogues of the generalized
Collatz--Wielandt formulas developed in \cite{IlyasovValeev2024}.

For arbitrary trial vectors $u,v>0$, define
\begin{equation}
	\label{eq:trial-certificates}
	\underline{\lambda}_\theta(u)
	:=
	\min_i\frac{(B_\theta u)_i}{(Gu)_i},
	\qquad
	\overline{\lambda}_\theta(v)
	:=
	\max_i\frac{(B_\theta^Tv)_i}{(G^Tv)_i}.
\end{equation}
Their evaluation requires only the matrix--vector products
$B_\theta u$ and $B_\theta^Tv$.  These quantities provide the basis for the
learning-compatible a posteriori enclosures developed in the next section.

\subsection{Classical positive case}
\label{subsec:Perron-Frobenius-special-case}

If $G$ is positive diagonal and $B_\theta\geq0$ is irreducible, then
Perron--Frobenius theory yields
\begin{equation}
	\label{eq:PF-collapse}
	\lambdaup_C(B_\theta,G)
	=
	\lambdalow_C(B_\theta,G)
	=
	\rho(G^{-1}B_\theta),
\end{equation}
with positive right and left generalized eigenvectors.

More generally, if $B_\theta$ is an irreducible Metzler matrix, then
\begin{equation}
	\label{eq:Metzler-collapse}
	\lambdaup_C(B_\theta,G)
	=
	\lambdalow_C(B_\theta,G)
	=
	\max_{\lambda\in\sigma(G^{-1}B_\theta)}
	\operatorname{Re}\lambda.
\end{equation}
Thus the cone formulation recovers the classical positive theory while
remaining applicable to sign-changing and non-cone-preserving operators.

%%%%%%%%%%%%%%%%%%%%%%%%%%%%%%%%%%%%%%%%%%%%%%%%%%%%%%%%%%%%%%%%%%%%%%%%%%%%%%%%%%%%%%%%%%%%%%%%%

% ================================================================
\section{Differentiable spectral certification and sensitivity}
\label{sec:bounds-certificates-stability}
% ================================================================

We now pass from the static cone framework to quantities suitable for
trainable operators $B_\theta(A,X)$: a posteriori enclosures,
differentiable certified surrogates, and right--left sensitivity.

% ----------------------------------------------------------------
\subsection{Spectral envelope}
\label{subsec:spectral-envelope}
% ----------------------------------------------------------------

For $G=I$, set
\[
S_\theta=\frac{B_\theta+B_\theta^T}{2}.
\]

\begin{theorem}[Spectral envelope]
	\label{thm:spectral-envelope}
	For every self-dual solid closed convex cone $\overline C$,
	\begin{equation}
		\label{eq:spectral-envelope}
		\lambda_{\min}(S_\theta)
		\leq
		\lambdalow_C(B_\theta)
		\leq
		\lambdaup_C(B_\theta)
		\leq
		\lambda_{\max}(S_\theta).
	\end{equation}
\end{theorem}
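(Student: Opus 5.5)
The plan is to prove the three inequalities in \eqref{eq:spectral-envelope} separately. The middle one follows from the minimax identities of Theorem~\ref{thm:main-minimax}. The two outer ones come from testing the inner extremum on the diagonal $v\approx u$ (respectively $u\approx v$). On that diagonal, with $G=I$, the quotient reduces to the classical symmetric Rayleigh quotient
\[
R_\theta(w,w)=\frac{\langle B_\theta w,w\rangle}{\langle w,w\rangle}=\frac{w^TS_\theta w}{w^Tw}\in[\lambda_{\min}(S_\theta),\lambda_{\max}(S_\theta)].
\]
Throughout, $G=I$ satisfies \eqref{eq:G-admissibility} because $\overline C$ is self-dual, so all denominators below are positive.

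\emph{Middle inequality.} By \eqref{eq:main-minimax-lower},
\[
\lambdalow_C=\sup_{u\in C^\circ}\inf_{v\in C}R_\theta(u,v).
\]
Enlarging the outer supremum from $C^\circ$ to $C$ and shrinking the inner infimum from $C$ to $C^\circ$ can only increase the value. Hence
\[
\lambdalow_C\le \sup_{u\in C}\inf_{v\in C^\circ}R_\theta(u,v)=\lambdaup_C .
\]
This is the same observation already used in the proof sketch of Theorem~\ref{thm:main-minimax}.

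\emph{Upper inequality.} Fix $u\in C$ and some $e\in C^\circ$, and set $v_\varepsilon=u+\varepsilon e$ for $\varepsilon>0$. Since $u\in\overline C$ and $e$ is interior, $v_\varepsilon\in C^\circ$. Hence
\[
\inf_{v\in C^\circ}R_\theta(u,v)\le R_\theta(u,v_\varepsilon).
\]
As $\varepsilon\to0$, the numerator and denominator of $R_\theta(u,v_\varepsilon)$ are affine in $\varepsilon$. The limiting denominator is $\langle u,u\rangle>0$, so $R_\theta(u,v_\varepsilon)\to R_\theta(u,u)\le\lambda_{\max}(S_\theta)$. Taking the supremum over $u\in C$ gives $\lambdaup_C\le\lambda_{\max}(S_\theta)$.

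\emph{Lower inequality.} This is symmetric. Fix $v\in C$ and put $u_\varepsilon=v+\varepsilon e\in C^\circ$. Then
\[
\sup_{u\in C^\circ}R_\theta(u,v)\ge R_\theta(u_\varepsilon,v)\to R_\theta(v,v)\ge\lambda_{\min}(S_\theta).
\]
Taking the infimum over $v\in C$ gives $\lambdalow_C\ge\lambda_{\min}(S_\theta)$.

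The only delicate point is that the inner extremum is taken over the open set $C^\circ$, so boundary vectors $u\in\partial\overline C$ cannot be paired with themselves directly. I handle this by the interior perturbation $u+\varepsilon e$ and the continuity argument above, which relies only on $\langle u,u\rangle>0$ for $u\neq0$. No attainment of extremizers and no cone preservation of $B_\theta$ are needed.
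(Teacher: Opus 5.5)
Your proof is correct and follows the paper's own argument. The two outer bounds come from testing the inner extremum on the diagonal, where $R_\theta$ reduces to the Rayleigh quotient of $S_\theta$, and the middle bound comes from the minimax identity of Theorem~\ref{thm:main-minimax} together with $C^\circ\subset C$. Your interior perturbation $u+\varepsilon e$ makes explicit a detail the paper's one-line justification leaves implicit: a vector $u\in\partial\overline C$ is not itself an admissible inner test vector.
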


Indeed, testing the inner extremum with the same vector reduces the quotient
to the Rayleigh quotient of $S_\theta$, while the middle inequality follows
from Theorem~\ref{thm:main-minimax}.  For positive diagonal $G$, the same
argument gives
\begin{equation}
	\label{eq:weighted-envelope}
	\lambda_{\min}\!\left(
	G^{-1/2}\frac{B_\theta+B_\theta^T}{2}G^{-1/2}
	\right)
	\leq
	\lambdalow_C(B_\theta,G)
	\leq
	\lambdaup_C(B_\theta,G)
	\leq
	\lambda_{\max}\!\left(
	G^{-1/2}\frac{B_\theta+B_\theta^T}{2}G^{-1/2}
	\right).
\end{equation}

% ----------------------------------------------------------------
\subsection{A posteriori two-sided certificates}
\label{subsec:two-sided-certificates}
% ----------------------------------------------------------------

Let
\[
\overline C=\R_+^N,
\qquad
G=\diag(g_1,\ldots,g_N),
\qquad
g_i>0,
\]
and suppose
\[
\lambdalow_C(B_\theta,G)
=
\lambdaup_C(B_\theta,G)
=
:\lambda_C(B_\theta,G).
\]
Then, for every $u,v>0$,
\begin{equation}
	\label{eq:two-sided-certificate}
	\underline{\lambda}_\theta(u)
	\leq
	\lambda_C(B_\theta,G)
	\leq
	\overline{\lambda}_\theta(v),
\end{equation}
where the trial quantities are defined in
\eqref{eq:trial-certificates}.  Hence
\begin{equation}
	\label{eq:certificate-gap}
	\operatorname{Gap}_\theta(u,v)
	:=
	\overline{\lambda}_\theta(v)
	-
	\underline{\lambda}_\theta(u)
	\geq0
\end{equation}
is the width of an a posteriori enclosure and vanishes at an exact positive
right--left eigenpair.  For sparse $B_\theta$, its evaluation requires only
$B_\theta u$ and $B_\theta^Tv$.

% ----------------------------------------------------------------
\subsection{Differentiable certified surrogates}
\label{subsec:smooth-certified-bounds}
% ----------------------------------------------------------------

The exact bounds involve coordinatewise minima and maxima and are therefore
nonsmooth.  Set
\[
r_i(u)
=
\frac{(B_\theta u)_i}{(Gu)_i},
\qquad
s_i(v)
=
\frac{(B_\theta^Tv)_i}{(G^Tv)_i},
\]
and, for $\varepsilon>0$, define
\begin{align}
	\underline{\lambda}_{\theta,\varepsilon}(u)
	&=
	-\varepsilon
	\log\sum_{i=1}^N e^{-r_i(u)/\varepsilon},
	\label{eq:smooth-lower-certificate}
	\\
	\overline{\lambda}_{\theta,\varepsilon}(v)
	&=
	\varepsilon
	\log\sum_{i=1}^N e^{s_i(v)/\varepsilon}.
	\label{eq:smooth-upper-certificate}
\end{align}
The standard soft-min/max inequalities give
\begin{align}
	0
	&\leq
	\underline{\lambda}_\theta(u)
	-
	\underline{\lambda}_{\theta,\varepsilon}(u)
	\leq
	\varepsilon\log N,
	\label{eq:softmin-error}
	\\
	0
	&\leq
	\overline{\lambda}_{\theta,\varepsilon}(v)
	-
	\overline{\lambda}_\theta(v)
	\leq
	\varepsilon\log N.
	\label{eq:softmax-error}
\end{align}
Consequently,
\begin{equation}
	\label{eq:smooth-certificate}
	\underline{\lambda}_{\theta,\varepsilon}(u)
	\leq
	\lambda_C(B_\theta,G)
	\leq
	\overline{\lambda}_{\theta,\varepsilon}(v),
\end{equation}
so smoothing preserves the certified enclosure.

With
\[
\operatorname{Gap}_{\theta,\varepsilon}(u,v)
=
\overline{\lambda}_{\theta,\varepsilon}(v)
-
\underline{\lambda}_{\theta,\varepsilon}(u),
\]
one has
\begin{equation}
	\label{eq:smooth-gap-error}
	0
	\leq
	\operatorname{Gap}_{\theta,\varepsilon}(u,v)
	-
	\operatorname{Gap}_\theta(u,v)
	\leq
	2\varepsilon\log N.
\end{equation}
Thus the smooth enclosure enlarges the exact one by at most
$2\varepsilon\log N$ while remaining differentiable in $u$, $v$, and the
trainable parameters.  In particular,
\begin{equation}
	\label{eq:smooth-spectral-cap}
	\overline{\lambda}_{\theta,\varepsilon}(v)
	\leq
	\Lambda
\end{equation}
is a differentiable sufficient condition for
$\lambda_C(B_\theta,G)\leq\Lambda$ and is the certified constraint used
below.

% ----------------------------------------------------------------
\subsection{Perturbation stability}
\label{subsec:perturbation-stability}
% ----------------------------------------------------------------

Let $u_C,v_C\in C^\circ$ form an interior right--left pair at $\lambda_C$ and
define
\[
c_1
=
\|v_C\|
\sup_{u\in C}
\frac{\|u\|}{\langle Gu,v_C\rangle},
\qquad
c_2
=
\|u_C\|
\sup_{v\in C}
\frac{\|v\|}{\langle Gu_C,v\rangle},
\qquad
c_0=\max\{c_1,c_2\}.
\]
Then, for every perturbation $H$,
\begin{equation}
	\label{eq:perturbed-spectral-interval}
	\left[
	\lambdalow_C(B_\theta+H,G),
	\lambdaup_C(B_\theta+H,G)
	\right]
	\subseteq
	\left[
	\lambda_C-c_0\|H\|_M,\,
	\lambda_C+c_0\|H\|_M
	\right].
\end{equation}
Indeed, use $u_C$ and $v_C$ as trial vectors together with
$|\langle Hu,v\rangle|
\leq\|H\|_M\|u\|\|v\|$.
Thus the cone-level interval is stable under arbitrary bounded matrix
perturbations.

% ----------------------------------------------------------------
\subsection{Right--left sensitivity}
\label{subsec:spectral-sensitivity}
% ----------------------------------------------------------------

The preceding estimate controls finite perturbations but does not distinguish
their directions.  For a simple interior generalized eigenvalue, standard
right--left perturbation theory gives
\[
B_\theta u_*=\lambda_*Gu_*,
\qquad
B_\theta^Tv_*=\lambda_*G^Tv_*,
\qquad
v_*^TGu_*\neq0,
\]
and, for every $H\in\R^{N\times N}$,
\begin{equation}
	\label{eq:differential-sensitivity-general}
	D\lambda_*(B_\theta)[H]
	=
	\frac{v_*^THu_*}{v_*^TGu_*}.
\end{equation}
Hence
\begin{equation}
	\label{eq:edge-sensitivity}
	\frac{\partial\lambda_*}{\partial b_{ij}}
	=
	\frac{(v_*)_i(u_*)_j}{v_*^TGu_*},
\end{equation}
and under $v_*^TGu_*=1$,
\[
\frac{\partial\lambda_*}{\partial b_{ij}}
=
(v_*)_i(u_*)_j.
\]

For a trainable operator $B_\theta$, the chain rule yields
\begin{equation}
	\label{eq:parameter-sensitivity}
	\frac{\partial\lambda_*}{\partial\theta_k}
	=
	\frac{
		v_*^T
		(\partial B_\theta/\partial\theta_k)
		u_*}
	{v_*^TGu_*}.
\end{equation}
Thus the right--left pair used for certification also identifies the
directed interactions and trainable parameters with the largest first-order
influence on the distinguished level.  This leads to the intervention and
adaptive-control constructions of the next section.

%%%%%%%%%%%%%%%%%%%%%%%%%%%%%%%%%%%%%%%%%%%%%%%%%%%%%%%%%%%%%%%%%%%%%%%%%%%%%%%%%%%%%%%%%%%%%%%%%%%%%%%%

% ================================================================
\section{Certified learning and adaptive spectral control}
\label{sec:certified-directed-learning}
% ================================================================

For the learning constructions below we assume the coincident-level regime
\[
\lambdalow_C(B_\theta,G)
=
\lambdaup_C(B_\theta,G)
=
:\lambda_C(B_\theta,G),
\]
as in the irreducible nonnegative propagation operators used in the main
classification experiments.

For semi-supervised node classification, let
\begin{equation}
	\label{eq:task-loss}
	\mathcal L_{\rm task}(\theta)
	=
	-\frac{1}{|\V_{\rm tr}|}
	\sum_{i\in\V_{\rm tr}}
	\log p_{i\,y_i}(\theta).
\end{equation}
Here \emph{certified} refers to the cone-level enclosure
\[
\underline{\lambda}_\theta(u)
\leq
\lambda_C(B_\theta,G)
\leq
\overline{\lambda}_\theta(v),
\]
not to certification of classification accuracy.

% ----------------------------------------------------------------
\subsection{Differentiable spectral constraint}
\label{subsec:spectral-regularization}
% ----------------------------------------------------------------

To impose
\begin{equation}
	\label{eq:desired-spectral-cap}
	\lambda_C(B_\theta,G)\leq\tau,
\end{equation}
it suffices by \eqref{eq:smooth-certificate} to require
\[
\overline{\lambda}_{\theta,\varepsilon}(v)\leq\tau.
\]
We therefore use
\begin{equation}
	\label{eq:spectral-regularization}
	\mathcal L_{\rm spec}
	=
	\left[
	\max\left\{
	0,\,
	\overline{\lambda}_{\theta,\varepsilon}(v)-\tau
	\right\}
	\right]^2,
\end{equation}
together with
\begin{equation}
	\label{eq:gap-loss}
	\mathcal L_{\rm gap}
	=
	\operatorname{Gap}_{\theta,\varepsilon}(u,v),
\end{equation}
and
\begin{equation}
	\label{eq:full-training-objective}
	\mathcal L_{\rm total}
	=
	\mathcal L_{\rm task}
	+
	\beta_{\rm spec}\mathcal L_{\rm spec}
	+
	\beta_{\rm gap}\mathcal L_{\rm gap}.
\end{equation}
Here $\theta$ parametrizes the learning model, while $u,v>0$ sharpen the
two-sided enclosure.  The task and cone-level terms have different roles,
so changes in $\lambda_C$ need not produce comparable changes in
classification accuracy.

For robustness against $\|H\|_M\leq\rho$,
\eqref{eq:perturbed-spectral-interval} gives the sufficient condition
\[
\overline{\lambda}_{\theta,\varepsilon}(v)+c_0\rho\leq\tau,
\]
with the corresponding optional penalty
\begin{equation}
	\label{eq:robust-spectral-loss}
	\mathcal L_{\rm robust}
	=
	\left[
	\max\left\{
	0,\,
	\overline{\lambda}_{\theta,\varepsilon}(v)+c_0\rho-\tau
	\right\}
	\right]^2.
\end{equation}

% ----------------------------------------------------------------
\subsection{Minimal-intervention spectral control}
\label{subsec:minimal-intervention-control}
% ----------------------------------------------------------------

Assume that $\lambda_C$ is a simple interior generalized eigenvalue and
normalize
\[
v_C^TGu_C=1.
\]
Then
\begin{equation}
	\label{eq:control-differential}
	D\lambda_C(B)[H]
	=
	v_C^THu_C
	=
	\langle \Sigma_C,H\rangle_F,
	\qquad
	\Sigma_C:=v_Cu_C^T.
\end{equation}
Thus $\Sigma_C$ is the first-order sensitivity map.

Let $\mathcal E_{\rm adm}$ denote the modifiable interactions and
$P_{\mathcal E_{\rm adm}}$ the corresponding coordinate projection.

\begin{proposition}[First-order optimal intervention]
	\label{prop:first-order-optimal-intervention}
	For
	\[
	\operatorname{supp}H\subseteq\mathcal E_{\rm adm},
	\qquad
	\|H\|_F\leq\eta,
	\]
	with $P_{\mathcal E_{\rm adm}}(\Sigma_C)\neq0$,
	\begin{equation}
		\label{eq:first-order-optimal-value}
		\min D\lambda_C(B)[H]
		=
		-\eta
		\left\|
		P_{\mathcal E_{\rm adm}}(\Sigma_C)
		\right\|_F,
	\end{equation}
	attained at
	\begin{equation}
		\label{eq:first-order-optimal-H}
		H_*
		=
		-\eta
		\frac{
			P_{\mathcal E_{\rm adm}}(\Sigma_C)}
		{
			\left\|
			P_{\mathcal E_{\rm adm}}(\Sigma_C)
			\right\|_F}.
	\end{equation}
\end{proposition}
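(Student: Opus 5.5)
The plan is to reduce the statement to Cauchy--Schwarz in the Frobenius inner product. By \eqref{eq:control-differential}, under the normalization $v_C^TGu_C=1$, the objective is the linear functional $H\mapsto\langle\Sigma_C,H\rangle_F$. So the proposition asks for the minimum of a linear function over the set
\[
K_\eta=\{H:\operatorname{supp}H\subseteq\mathcal E_{\rm adm},\ \|H\|_F\leq\eta\},
\]
which is a ball of radius $\eta$ in the coordinate subspace $V=\operatorname{range}P_{\mathcal E_{\rm adm}}$.

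The first step removes the inadmissible entries. $P_{\mathcal E_{\rm adm}}$ is the orthogonal projection onto $V$, since it zeroes coordinates and is therefore self-adjoint and idempotent in $\langle\cdot,\cdot\rangle_F$. Every $H\in K_\eta$ satisfies $H=P_{\mathcal E_{\rm adm}}H$, so
\[
\langle\Sigma_C,H\rangle_F=\langle\Sigma_C,P_{\mathcal E_{\rm adm}}H\rangle_F=\langle P_{\mathcal E_{\rm adm}}\Sigma_C,H\rangle_F.
\]
Only the projected sensitivity $P_{\mathcal E_{\rm adm}}(\Sigma_C)$, i.e.\ the entries $(v_C)_i(u_C)_j$ on admissible pairs, enters the problem.

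The second step is the lower bound. Cauchy--Schwarz gives
\[
\langle P_{\mathcal E_{\rm adm}}\Sigma_C,H\rangle_F\geq-\|P_{\mathcal E_{\rm adm}}\Sigma_C\|_F\,\|H\|_F\geq-\eta\|P_{\mathcal E_{\rm adm}}\Sigma_C\|_F .
\]
The third step checks attainment. The candidate $H_*$ in \eqref{eq:first-order-optimal-H} is well defined because $P_{\mathcal E_{\rm adm}}(\Sigma_C)\neq0$. It is supported in $\mathcal E_{\rm adm}$ and has $\|H_*\|_F=\eta$, so it is feasible. Substituting it gives
\[
\langle P_{\mathcal E_{\rm adm}}\Sigma_C,H_*\rangle_F=-\eta\|P_{\mathcal E_{\rm adm}}\Sigma_C\|_F,
\]
which proves \eqref{eq:first-order-optimal-value}. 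If uniqueness is also wanted, the equality case of Cauchy--Schwarz forces $H$ to be a nonpositive multiple of $P_{\mathcal E_{\rm adm}}\Sigma_C$ with $\|H\|_F=\eta$, and that matrix is $H_*$.

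There is no real obstacle here; the argument is routine once \eqref{eq:control-differential} is available. The only point to state carefully is that $D\lambda_C(B)[H]$ is a genuine linear functional in $H$. This relies on $\lambda_C$ being a simple interior eigenvalue, so that the standard right--left perturbation formula \eqref{eq:differential-sensitivity-general} applies with $G$ fixed. The result is a statement about this first-order model, not about the actual change of $\lambda_C$ under the finite perturbation $H_*$.
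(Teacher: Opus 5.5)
Your proposal is correct and matches the paper's proof. The paper likewise rewrites $D\lambda_C(B)[H]$ as $\langle P_{\mathcal E_{\rm adm}}(\Sigma_C),H\rangle_F$ for admissible $H$ and applies Cauchy--Schwarz. Your explicit feasibility check of $H_*$ and the equality-case uniqueness remark spell out details the paper leaves implicit.
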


\begin{proof}
	For every admissible $H$,
	\[
	D\lambda_C(B)[H]
	=
	\langle P_{\mathcal E_{\rm adm}}(\Sigma_C),H\rangle_F,
	\]
	and the claim follows from Cauchy--Schwarz.
\end{proof}

Hence the right--left modes determine a locally optimal graph-supported
direction for decreasing the distinguished level.  In particular,
\[
\frac{\partial\lambda_C}{\partial b_{ij}}
=
(v_C)_i(u_C)_j,
\]
which gives the natural coordinatewise sensitivity ranking.

\begin{corollary}[$L^1$-budget edge intervention]
	\label{cor:l1-optimal-intervention}
	Let
	\[
	g_e:=\frac{\partial\lambda_C}{\partial w_e}.
	\]
	For decrease-only perturbations
	\[
	\Delta w_e=-h_e,
	\qquad
	h_e\geq0,
	\qquad
	\sum_e h_e\leq\eta,
	\]
	one has
	\begin{equation}
		\label{eq:l1-first-order-change}
		D\lambda_C(w)[\Delta w]
		=
		-\sum_e g_eh_e.
	\end{equation}
	Without individual bounds on $h_e$,
	\begin{equation}
		\label{eq:l1-optimal-value}
		\min D\lambda_C(w)[\Delta w]
		=
		-\eta\max_e(g_e)_+,
	\end{equation}
	so the full budget is assigned to an edge of maximal positive
	sensitivity.  Under box constraints
	\begin{equation}
		\label{eq:l1-box-constraints}
		0\leq h_e\leq\bar h_e,
	\end{equation}
	the first-order optimum is obtained by ordering edges by decreasing
	positive $g_e$ and saturating the corresponding bounds until the budget
	is exhausted.
\end{corollary}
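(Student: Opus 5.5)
The plan is to reduce the corollary to a linear program over the weighted simplex and solve it by inspection. First, by the chain rule (as in \eqref{eq:parameter-sensitivity}) and the definition $g_e=\partial\lambda_C/\partial w_e$, the differential of $\lambda_C$ with respect to the edge weights at $w$ is the linear functional $\Delta w\mapsto\sum_e g_e\,\Delta w_e$. This is well defined because $\lambda_C$ is a simple interior generalized eigenvalue, so \eqref{eq:control-differential} applies. Substituting $\Delta w_e=-h_e$ gives \eqref{eq:l1-first-order-change} immediately. The whole problem is then to minimize the linear objective $-\sum_e g_eh_e$, equivalently to maximize $\sum_e g_eh_e$, over the polytope $\{h\geq0,\ \sum_eh_e\leq\eta\}$.

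For the unconstrained-box case, we bound the objective termwise. Since $h_e\geq0$, we have $g_eh_e\leq (g_e)_+h_e\leq \max_{e'}(g_{e'})_+\,h_e$. Summing and using $\sum_e h_e\leq\eta$ gives $\sum_eg_eh_e\leq\eta\max_e(g_e)_+$. For attainment we split into two cases:
\begin{itemize}
\item if $\max_e g_e>0$, put $h_{e^*}=\eta$ at an edge $e^*$ maximizing $g_e$ and $h_e=0$ elsewhere;
\item if all $g_e\leq0$, take $h=0$, which yields value $0=-\eta\max_e(g_e)_+$.
\end{itemize}
This proves \eqref{eq:l1-optimal-value} and the statement that the full budget goes to one edge of maximal positive sensitivity.

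For the box-constrained case \eqref{eq:l1-box-constraints}, the problem is a fractional (continuous) knapsack. We order the edges with $g_e>0$ as $g_{e_1}\geq g_{e_2}\geq\cdots$ and define the greedy $h^g$: saturate $h_{e_k}=\bar h_{e_k}$ until the remaining budget is smaller than the next bound, assign the remainder to that edge, and set all other $h_e=0$. Edges with $g_e\leq0$ never receive mass, because putting mass on them cannot increase the objective. Optimality follows from an exchange argument. Take any feasible $h$ and suppose some $h_{e_j}<\bar h_{e_j}$ while mass sits on an edge $e$ with $g_e<g_{e_j}$, or budget remains unused. Then moving mass $\delta$ to $e_j$ changes the objective by $\delta(g_{e_j}-g_e)\geq0$, or by $\delta g_{e_j}>0$ if the budget was slack. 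Repeating such transfers turns $h$ into $h^g$ without decreasing the objective. An alternative is LP duality with multiplier $\mu$ for the budget constraint: the threshold $\mu=g_{e_k}$ at the partially filled edge certifies complementary slackness.

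The main obstacle is making the exchange argument airtight when ties occur among the $g_e$, and in the boundary cases where the total positive capacity $\sum_{g_e>0}\bar h_e$ is smaller than $\eta$. In the latter case the greedy allocation stops with budget left over, and that is correct, since adding mass to non-positive edges cannot help. I would handle both issues cleanly through the LP duality certificate rather than through the exchange argument. Everything here is first order: no claim is made about the actual finite change in $\lambda_C$.
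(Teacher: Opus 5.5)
Your proposal is correct and follows the same route as the paper, whose proof simply observes that the problem is the maximization of the linear functional $\sum_e g_eh_e$ over the $L^1$-budget polytope, with or without box constraints. Your termwise bound, the attainment argument (including the case $\max_e g_e\leq 0$ with $h=0$), and the greedy fractional-knapsack argument via exchange or LP duality spell out the details the paper leaves implicit.
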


\begin{proof}
	This is the maximization of the linear functional
	$\sum_e g_eh_e$ under the stated $L^1$ and box constraints.
\end{proof}

Since the sensitivities change with the operator, their recomputation after
each modification leads naturally to adaptive control.

% ----------------------------------------------------------------
\subsection{Adaptive spectral control}
\label{subsec:adaptive-spectral-control}
% ----------------------------------------------------------------

Starting from $B^{(0)}$, one adaptive step is
\begin{equation}
	\label{eq:adaptive-control-scheme}
	B^{(k)}
	\longrightarrow
	(\lambda_C^{(k)},u_C^{(k)},v_C^{(k)})
	\longrightarrow
	\Sigma_C^{(k)}
	=
	v_C^{(k)}(u_C^{(k)})^T
	\longrightarrow
	H^{(k)}
	\longrightarrow
	B^{(k+1)},
\end{equation}
where
\[
B^{(k+1)}=B^{(k)}+H^{(k)}
\]
and $H^{(k)}$ satisfies the current admissibility and budget constraints.

The procedure stops when
\[
\overline{\lambda}^{(k)}_{\varepsilon}\leq\tau
\]
or the cumulative intervention budget is exhausted.  Unlike a fixed
ranking, the adaptive scheme recomputes the right--left modes after each
modification and hence updates the local linearization along the control
path.

% ----------------------------------------------------------------
\subsection{Scope of the certificate}
\label{subsec:interpretation-learned-modes}
% ----------------------------------------------------------------

The pair $(u_C,v_C)$ provides both an a posteriori enclosure and a local
control direction.  For strongly nonnormal operators, however, control of
$\lambda_C(B_\theta,G)$ does not in general control transient amplification,
pseudospectral growth, operator norms, or prediction robustness.  The method
therefore controls a distinguished cone-relevant spectral mode, not the full
nonnormal dynamics.

%%%%%%%%%%%%%%%%%%%%%%%%%%%%%%%%%%%%%%%%%%%%%%%%%%%%%%%%%%%%%%%%%%%%%%%%%%%%%%%%%%%%%%%%%%%%%%%%%%%%%%%%

% ================================================================
\section{Algorithms and computational cost}
\label{sec:algorithms-complexity}
% ================================================================

Throughout this section,
\[
\overline C=\R_+^N,
\qquad
G=\diag(g_1,\ldots,g_N),
\qquad
g_i>0.
\]
The positive vectors $u,v$ are auxiliary modes associated with the current
operator $B_\theta$.

% ----------------------------------------------------------------
\subsection{Mode computation and alternating training}
\label{subsec:mode-computation}
% ----------------------------------------------------------------

For fixed $B_\theta$, the modes are obtained from
\begin{equation}
	\label{eq:mode-optimization}
	\max_{u>0}
	\underline{\lambda}_{\theta,\varepsilon}(u),
	\qquad
	\min_{v>0}
	\overline{\lambda}_{\theta,\varepsilon}(v).
\end{equation}
Positivity and scale normalization are enforced by
\begin{equation}
	\label{eq:positive-mode-parametrization}
	u
	=
	\frac{\softplus(z_u)+\delta_{\rm pos}\mathbf1}
	{\|\softplus(z_u)+\delta_{\rm pos}\mathbf1\|_1},
	\qquad
	v
	=
	\frac{\softplus(z_v)+\delta_{\rm pos}\mathbf1}
	{\|\softplus(z_v)+\delta_{\rm pos}\mathbf1\|_1},
\end{equation}
with $\delta_{\rm pos}>0$.  Since the two optimizations are independent for
fixed $B_\theta$, they can be performed simultaneously by minimizing
\begin{equation}
	\label{eq:mode-loss}
	\mathcal L_{\rm mode}
	=
	\overline{\lambda}_{\theta,\varepsilon}(v)
	-
	\underline{\lambda}_{\theta,\varepsilon}(u)
	=
	\operatorname{Gap}_{\theta,\varepsilon}(u,v).
\end{equation}
The smoothing parameter $\varepsilon$ may be decreased near convergence.

Training alternates between mode and model updates:
\begin{enumerate}
	\item[(i)]
	With $\theta$ fixed, update $z_u,z_v$ by minimizing
	$\mathcal L_{\rm mode}$.
	
	\item[(ii)]
	Evaluate the exact bounds
	$\underline{\lambda}_\theta(u)$ and
	$\overline{\lambda}_\theta(v)$.
	
	\item[(iii)]
	With $u,v$ fixed, update $\theta$ using
	$\mathcal L_{\rm total}$ from
	\eqref{eq:full-training-objective}, optionally including
	$\beta_{\rm rob}\mathcal L_{\rm robust}$.
\end{enumerate}
The smooth quantities are differentiated by automatic differentiation, and
the mode variables are warm-started from the preceding outer iteration.

% ----------------------------------------------------------------
\subsection{Adaptive intervention after training}
\label{subsec:adaptive-control-algorithm}
% ----------------------------------------------------------------

At iteration $k$, normalize
\[
(v_C^{(k)})^TG u_C^{(k)}=1
\]
and form
\begin{equation}
	\label{eq:algorithm-sensitivity-map}
	\Sigma_C^{(k)}
	=
	v_C^{(k)}(u_C^{(k)})^T.
\end{equation}
After restriction to the admissible interactions, choose $H^{(k)}$ according
to the prescribed budget and set
\begin{equation}
	\label{eq:algorithm-control-update}
	B^{(k+1)}
	=
	B^{(k)}+H^{(k)}.
\end{equation}
The modes and certificate are recomputed before the next intervention.
The procedure stops when the certified cap is reached or the cumulative
budget is exhausted.  Recomputing $\Sigma_C^{(k)}$ distinguishes the
adaptive scheme from a fixed initial sensitivity ranking.

% ----------------------------------------------------------------
\subsection{Computational cost and stopping criterion}
\label{subsec:sparse-implementation}
% ----------------------------------------------------------------

For graph-sparse operators, or graph-sparse operators augmented by structured
low-rank terms such as the Cora teleportation term, the products
\[
B_\theta u,
\qquad
B_\theta^Tv
\]
can be evaluated in
\[
O(|\E|+N)
\]
operations.  The componentwise ratios and soft-min/max evaluations require
$O(N)$ additional work.  Hence one two-sided certificate costs
\begin{equation}
	\label{eq:certificate-complexity}
	O(|\E|+N),
\end{equation}
apart from the model-specific evaluation of trainable edge weights.  With
$m_{\rm mode}$ inner steps, the spectral cost per outer iteration is
\[
O\!\left(m_{\rm mode}(|\E|+N)\right).
\]

Although the smooth gap \eqref{eq:mode-loss} is used for optimization,
certification uses the exact gap.  The mode iteration stops when
\begin{equation}
	\label{eq:stopping-gap}
	\operatorname{Gap}_\theta(u,v)
	=
	\overline{\lambda}_\theta(v)
	-
	\underline{\lambda}_\theta(u)
	\leq
	\eta_{\rm gap}.
\end{equation}
Residual norms may additionally be monitored for interior eigenpairs, but
they do not by themselves provide a two-sided enclosure.

Thus both training and adaptive control require only sparse or structured
forward and transpose propagations, componentwise operations, and standard
gradient-based updates.

%%%%%%%%%%%%%%%%%%%%%%%%%%%%%%%%%%%%%%%%%%%%%%%%%%%%%%%%%%%%%%%%%%%%%%%%%%%%%%%%%%%%%%%%%%%%%%%%%%%%

% ================================================================
\section{Numerical experiments}
\label{sec:experiments}
% ================================================================

The experiments test the cone framework beyond cone-preserving operators,
certified directed learning, and sensitivity-based control of a learned
real-world operator.  Unless stated otherwise, eigensolvers are used only
for post-hoc validation.

% ================================================================
\subsection{Validation of the cone spectral framework}
\label{subsec:numerical-validation}
% ================================================================

\subsubsection{Signed perturbations and boundary quasi-pairs}
\label{sec:signed-boundary-transition}

Starting from a positive reference operator $B$, we consider
\[
B_\rho^{(k)}=B+\rho H_k,
\qquad
k=1,\ldots,5,
\]
where the $H_k$ are nonsymmetric sign-changing perturbations with
$\|H_k\|_2=1$.  For $G=I$ and $C=\R_+^N$, the right and left cone levels are
computed by linear programming and bisection from
\[
(B_\rho-\lambda I)u\geq0,
\qquad
(\lambda I-B_\rho^T)v\geq0,
\qquad
u,v\in\Delta,
\]
where
\[
\Delta=\{x\in\R_+^N:\sum_i x_i=1\}.
\]

For the $25$ matrices corresponding to
$\rho\in\{0.1,0.2,0.5,1,2\}$ and the five perturbation directions, the
independently computed right and left levels coincide to numerical
precision.  Seven cases have an interior right--left pair with
\[
\min_{\mu\in\sigma(B_\rho)}
|\widehat\lambda_C-\mu|
=
O(10^{-10}),
\]
whereas in the remaining $18$ boundary cases the common level is typically
separated from the ordinary spectrum by $10^{-3}$--$10^{-2}$.  The
complementarity error satisfies
\[
\max_i
\left\{
|(v_C)_i(r_C)_i|,
|(u_C)_i(s_C)_i|
\right\}
\leq
1.30\times10^{-11}.
\]

Bisection localizes the first loss of strict positivity.

\begin{table}[ht]
	\centering
	\caption{Transition from an interior eigenpair to a boundary quasi-pair.}
	\label{tab:boundary-transition}
	\begin{tabular}{cccc}
		\toprule
		Direction & $\rho_*^{\rm num}$ & First contact & Vertex\\
		\midrule
		$1$ & $0.108915329$ & left  & $54$\\
		$2$ & $0.105184555$ & right & $16$\\
		$3$ & $0.226080799$ & right & $49$\\
		$4$ & $0.122442245$ & right & $22$\\
		$5$ & $0.273017025$ & right & $49$\\
		\bottomrule
	\end{tabular}
\end{table}

Below $\rho_*^{\rm num}$ the common level agrees with an eigenvalue and the
modes are strictly positive; after contact, boundary complementary
quasi-pairs appear and may become non-spectral.

\begin{figure}[ht]
	\centering
	\includegraphics[width=0.76\textwidth]
	{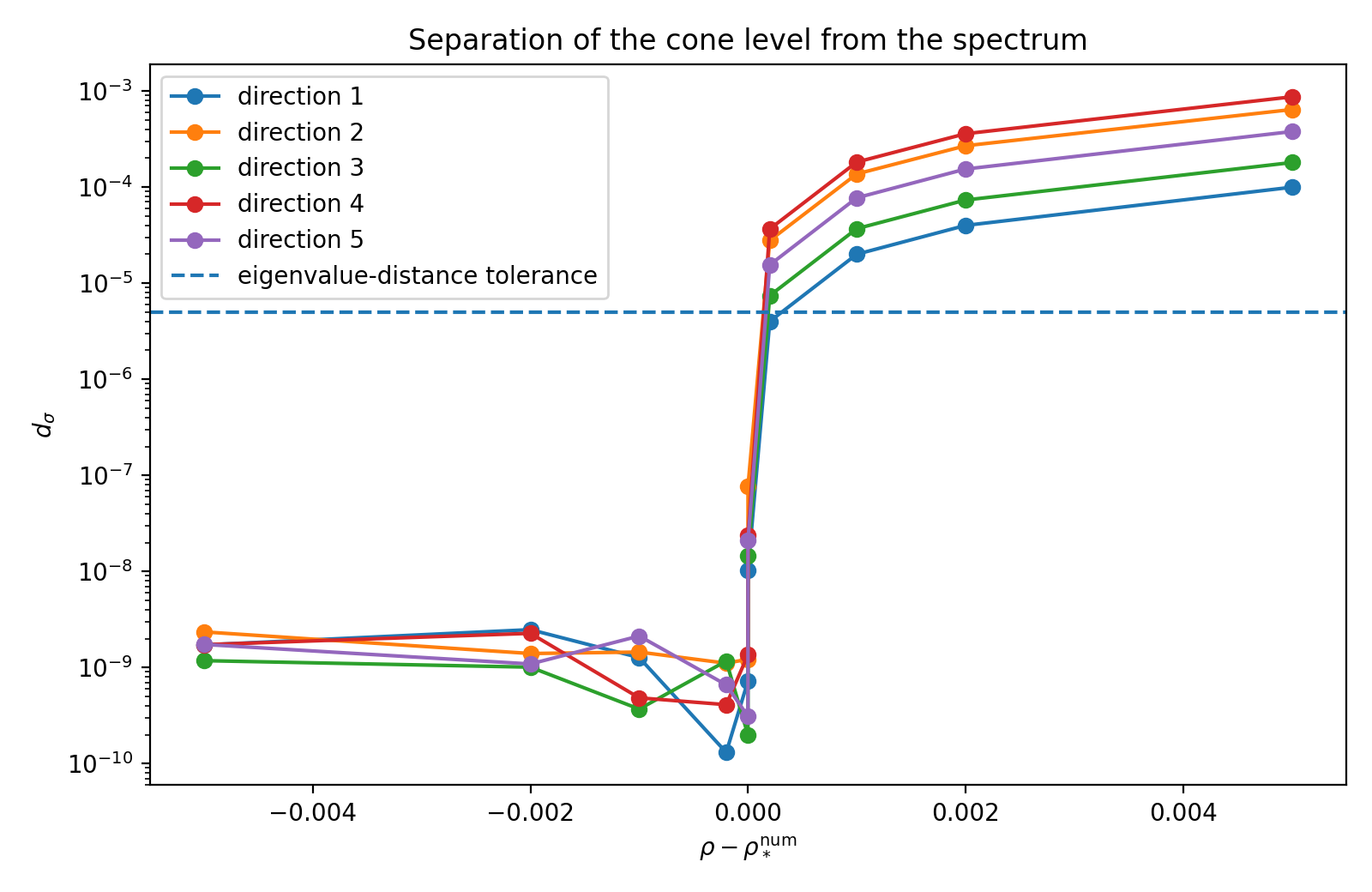}
	\caption{Distance between the common cone level and the ordinary spectrum
		near the boundary transition.}
	\alttext{Log-scale plot of the distance between the common cone level and
		the ordinary spectrum across the numerically detected positivity
		boundary for five perturbation directions.}
	\label{fig:stage8f-spectral-distance-transition}
\end{figure}

% ----------------------------------------------------------------
\subsubsection{Validation of right--left sensitivity}
\label{subsec:sensitivity-validation}
% ----------------------------------------------------------------

For a simple interior eigenvalue normalized by $v_C^Tu_C=1$,
\[
D\lambda_C(B)[H]=v_C^THu_C.
\]
Central finite differences give absolute errors
\[
7.8\times10^{-6},\qquad
7.8\times10^{-8},\qquad
7.9\times10^{-10}
\]
for step sizes $10^{-2}$, $10^{-3}$, and $10^{-4}$, respectively.

For a graph-supported direction $H_{\rm graph}$ with
$\|H_{\rm graph}\|_2=1$,
\[
v_C^TH_{\rm graph}u_C\approx0.9441,
\]
whereas the maximum among $50$ random graph-supported directions is
$0.1313$.  For $B+0.05H_{\rm graph}$, the observed shift is $0.04697$,
compared with the first-order prediction $0.04721$.  Thus the right--left
product accurately captures both directional sensitivity and influential
graph interactions.

% ================================================================
\subsection{Certified learning on a directed stochastic block model}
\label{subsec:certified-directed-sbm}
% ================================================================

We use a two-class directed stochastic block model with $N=100$, $50$
vertices per class, and
\[
P=
\begin{pmatrix}
	0.12&0.08\\
	0.02&0.12
\end{pmatrix}.
\]
Each vertex has an $8$-dimensional Gaussian feature vector, with a
$20/20/60$ train/validation/test split.

For $G=I$,
\[
(B_\theta)_{ij}
=
\delta_{ij}
+
a_{ij}
\frac{w_{ij}}
{\sqrt{d_i^{\rm in}d_j^{\rm out}}},
\qquad
w_{ij}\in[0.05,1],
\]
where the edge weights are trainable and feature dependent.  We impose the
cap $\Lambda=1.55$, chosen in preliminary runs to be nontrivial but
attainable.  No eigensolver is used during training or certification.
Results are reported over $50$ independent graph realizations.

\begin{table}[t]
	\centering
	\caption{Certified learning over $50$ directed-SBM realizations
		(mean $\pm$ sample standard deviation).}
	\label{tab:stage7d_results}
	\begin{tabular}{lcc}
		\toprule
		Metric & Baseline & Certified model\\
		\midrule
		Test accuracy
		& $0.8710\pm0.0521$
		& $0.8717\pm0.0536$\\
		Perron root
		& $1.575696\pm0.026550$
		& $1.546240\pm0.008854$\\
		Certified cap success
		& -- & $50/50$\\
		Runs requiring intervention
		& -- & $44/50$\\
		Relative operator change
		& -- & $0.015985\pm0.011792$\\
		\bottomrule
	\end{tabular}
\end{table}

The paired accuracy change is
\[
0.0007\pm0.0150,
\qquad
95\%~\mathrm{CI}=[-0.0036,0.0049],
\qquad
p=0.755,
\]
so no systematic accuracy change is detected.  For the $44$ runs requiring
intervention,
\[
\lambda_C=1.548860\pm0.000338,
\]
showing that violating operators are moved only slightly inside the feasible
region.  Post-hoc eigensolvers confirm the certified bound in all $50$ runs.

\begin{figure}[t]
	\centering
	\includegraphics[width=0.68\textwidth]
	{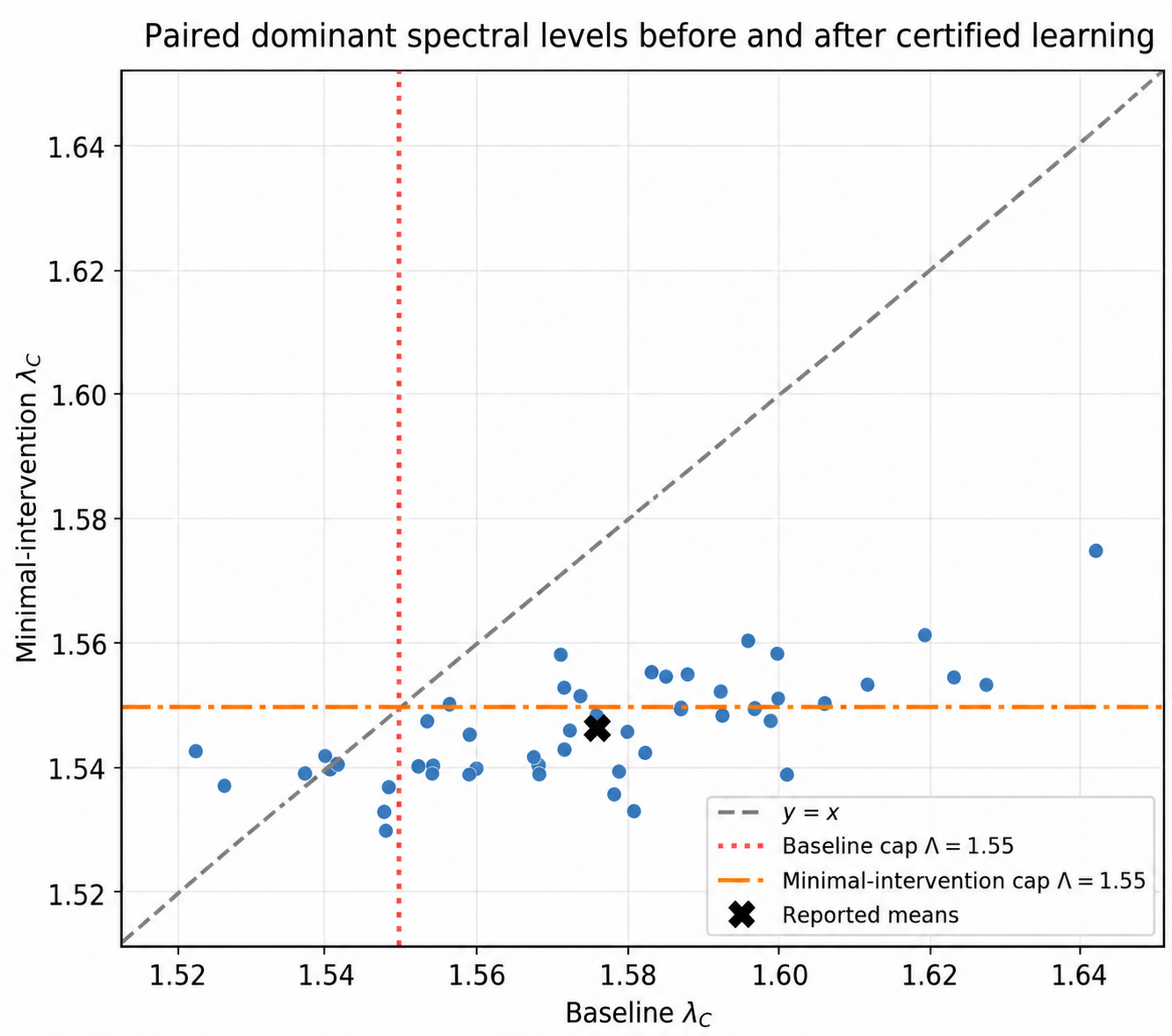}
	\caption{Paired dominant spectral levels before and after certified
		learning. Operators violating the cap are moved just below
		$\Lambda=1.55$.}
	\alttext{Scatter plot of paired dominant spectral levels for 50 runs,
		showing operators above the cap moving to values just below 1.55.}
	\label{fig:stage7d_summary}
\end{figure}

% ================================================================
\subsection{What is lost by graph symmetrization?}
\label{subsec:symmetrization-comparison}
% ================================================================

To isolate information carried by edge orientation, we use $N=100$ vertices
in two equal classes and
\begin{equation}
	\label{eq:symmetrization-Pdelta}
	P(\delta)
	=
	\begin{pmatrix}
		0.06 & 0.06+\delta\\
		0.06-\delta & 0.06
	\end{pmatrix},
	\qquad
	\delta\in\{0,0.01,0.02,0.03,0.04\}.
\end{equation}
Since
\begin{equation}
	\label{eq:symmetrization-Paverage}
	\frac{P(\delta)+P(\delta)^T}{2}
	=
	\begin{pmatrix}
		0.06&0.06\\
		0.06&0.06
	\end{pmatrix},
\end{equation}
the expected symmetrized graph contains no class-dependent signal; increasing
$\delta$ introduces information only through edge direction.  Node features
are nearly uninformative.

We use $50$ fully paired seeds.  Within each seed, a common latent uniform
matrix generates the complete $\delta$-family; features, labels, data split,
and initialization are shared, and only strongly connected families are
retained.  The directed operator is
\[
B_{\rm dir}
=
I+
D_{\rm in}^{-1/2}(A\odot W)D_{\rm out}^{-1/2},
\]
where $W$ is the trainable edge-weight matrix obtained from the
feature-based score matrix $S$.  The symmetric branch uses
\[
A_{\rm sym}=\frac{A+A^T}{2},
\qquad
S_{\rm sym}=\frac{S+S^T}{2},
\]
and constructs $B_{\rm sym}$ by the same weighting and normalization rule.
Thus directionality cannot be reintroduced through the learned edge weights.
The two branches otherwise use identical architecture and optimization, and
no spectral regularization is applied.

\begin{table}[t]
	\centering
	\caption{Paired directed-versus-symmetrized comparison
		(mean $\pm$ sample standard deviation), with
		$\Delta\mathrm{Acc}
		=\mathrm{Acc}_{\rm dir}-\mathrm{Acc}_{\rm sym}$.}
	\label{tab:symmetrization-sweep}
	\begin{tabular}{ccccc}
		\toprule
		$\delta$
		& Directed accuracy
		& Symmetric accuracy
		& $\Delta\mathrm{Acc}$
		& Paired $t$-test $p$\\
		\midrule
		$0.00$ & $0.5007\pm0.0496$ & $0.5017\pm0.0556$
		& $-0.0010$ & $0.893$\\
		$0.01$ & $0.5023\pm0.0602$ & $0.4993\pm0.0645$
		& $0.0030$ & $0.714$\\
		$0.02$ & $0.5150\pm0.0620$ & $0.5057\pm0.0577$
		& $0.0093$ & $0.289$\\
		$0.03$ & $0.5400\pm0.0606$ & $0.5050\pm0.0536$
		& $0.0350$ & $4.01\times10^{-4}$\\
		$0.04$ & $0.5967\pm0.0823$ & $0.5027\pm0.0551$
		& $0.0940$ & $1.36\times10^{-8}$\\
		\bottomrule
	\end{tabular}
\end{table}

At $\delta=0$ no difference is detected, whereas at $\delta=0.04$,
\begin{equation}
	\label{eq:symmetrization-delta004}
	\Delta\mathrm{Acc}
	=
	0.0940,
	\qquad
	95\%~\mathrm{CI}
	=
	[0.0662,0.1218],
	\qquad
	p=1.36\times10^{-8}.
\end{equation}
Across the coupled sweep, the mean fitted slope of
$\Delta\mathrm{Acc}(\delta)$ is $2.220$, with
\begin{equation}
	\label{eq:symmetrization-trend}
	95\%~\mathrm{CI}
	=
	[1.471,2.969],
	\qquad
	p=2.75\times10^{-7}.
\end{equation}
Thus the directed advantage increases with directional signal while the
expected symmetrized graph remains unchanged, showing that symmetrization
can remove predictive information carried by edge orientation.

% ================================================================
\subsection{Adaptive spectral control on Cora}
\label{subsec:cora-adaptive-sensitivity}
% ================================================================

We finally consider the directed Cora citation network
\cite{SenEtAl2008}, with $N=2708$, $|\E|=5429$, $1433$ node features, and
$7$ classes.  A stratified train/validation/test split contains
$1353/402/953$ vertices.

For $G=I$, the learned operator is
\[
B_\theta=I+\beta T_W,
\]
where
\[
(T_Wx)_i
=
(1-\eta)
\sum_{j:\,j\to i}
q_{ij}w_{ij}x_j
+
\eta\bar x,
\qquad
q_{ij}
=
(d_i^{\rm in}d_j^{\rm out})^{-1/2},
\qquad
\bar x=\frac1N\sum_{j=1}^N x_j,
\qquad
\eta=0.01.
\]
The selected classifier gives
\[
\operatorname{Acc}_{\rm val}=0.8184,
\qquad
\operatorname{Acc}_{\rm test}=0.8143,
\]
and the learned operator has the certificate
\[
1.786674854668
\leq
\lambda_C
\leq
1.786674854670,
\qquad
\operatorname{Gap}=1.925\times10^{-12}.
\]

In this post-training control experiment, the right--left Perron pair is
recomputed by matrix-free ARPACK iteration, with positive power iteration
used as a fallback.

For an edge weight $w_{ij}$,
\begin{equation}
	\label{eq:cora-edge-weight-sensitivity-theory}
	\frac{\partial\lambda_C}{\partial w_{ij}}
	=
	\beta(1-\eta)
	q_{ij}(v_C)_i(u_C)_j.
\end{equation}
We reduce the most sensitive admissible weights and recompute the
right--left pair after each intervention.  The cumulative budget is
\[
\operatorname{Budget}
=
\frac{
	\sum_{(i,j)\in\E}|w_{ij}-w_{ij}^{(0)}|}
{\sum_{(i,j)\in\E}w_{ij}^{(0)}}
\times100\%.
\]
The adaptive strategy recomputes the sensitivity ranking after each
intervention, whereas the fixed strategy retains the initial ranking.

\begin{table}[ht]
	\centering
	\caption{Adaptive and fixed sensitivity control on Cora.}
	\label{tab:cora-adaptive-control}
	\begin{tabular}{cccc}
		\toprule
		Budget
		&
		$\lambda_C^{\rm adaptive}$
		&
		$\Delta\lambda_C^{\rm adaptive}$
		&
		$\Delta\lambda_C^{\rm fixed}$
		\\
		\midrule
		$0.05\%$ & $1.599604$ & $0.187071$ & $0.004253$\\
		$0.10\%$ & $1.533761$ & $0.252914$ & $0.192992$\\
		$0.20\%$ & $1.476616$ & $0.310059$ & $0.232708$\\
		$0.30\%$ & $1.444719$ & $0.341956$ & $0.259651$\\
		$0.50\%$ & $1.402757$ & $0.383918$ & $0.279338$\\
		\bottomrule
	\end{tabular}
\end{table}

At a $0.5\%$ budget, the adaptive procedure reduces $\lambda_C$ from
$1.786675$ to $1.402757$, a decrease of about $21.5\%$, while test accuracy
remains $0.8143$ without retraining.  The reduction $0.383918$ is
approximately $37\%$ larger than the fixed-ranking reduction $0.279338$.

The effective mode size
\[
N_{\rm eff}(x)
=
\frac{(\sum_i x_i^2)^2}{\sum_i x_i^4}
\]
increases from
\[
N_{\rm eff}(u_C)=N_{\rm eff}(v_C)=2.00
\]
to
\[
N_{\rm eff}(u_C)=12.08,
\qquad
N_{\rm eff}(v_C)=12.32
\]
at the $0.5\%$ adaptive budget.  Thus the modes become substantially less
localized.  The gain over a fixed ranking reflects the local nature of
\eqref{eq:cora-edge-weight-sensitivity-theory}: modifying the operator
changes the right and left modes and hence the relevant edge sensitivities.

\begin{figure}[ht]
	\centering
	\includegraphics[width=0.74\textwidth]{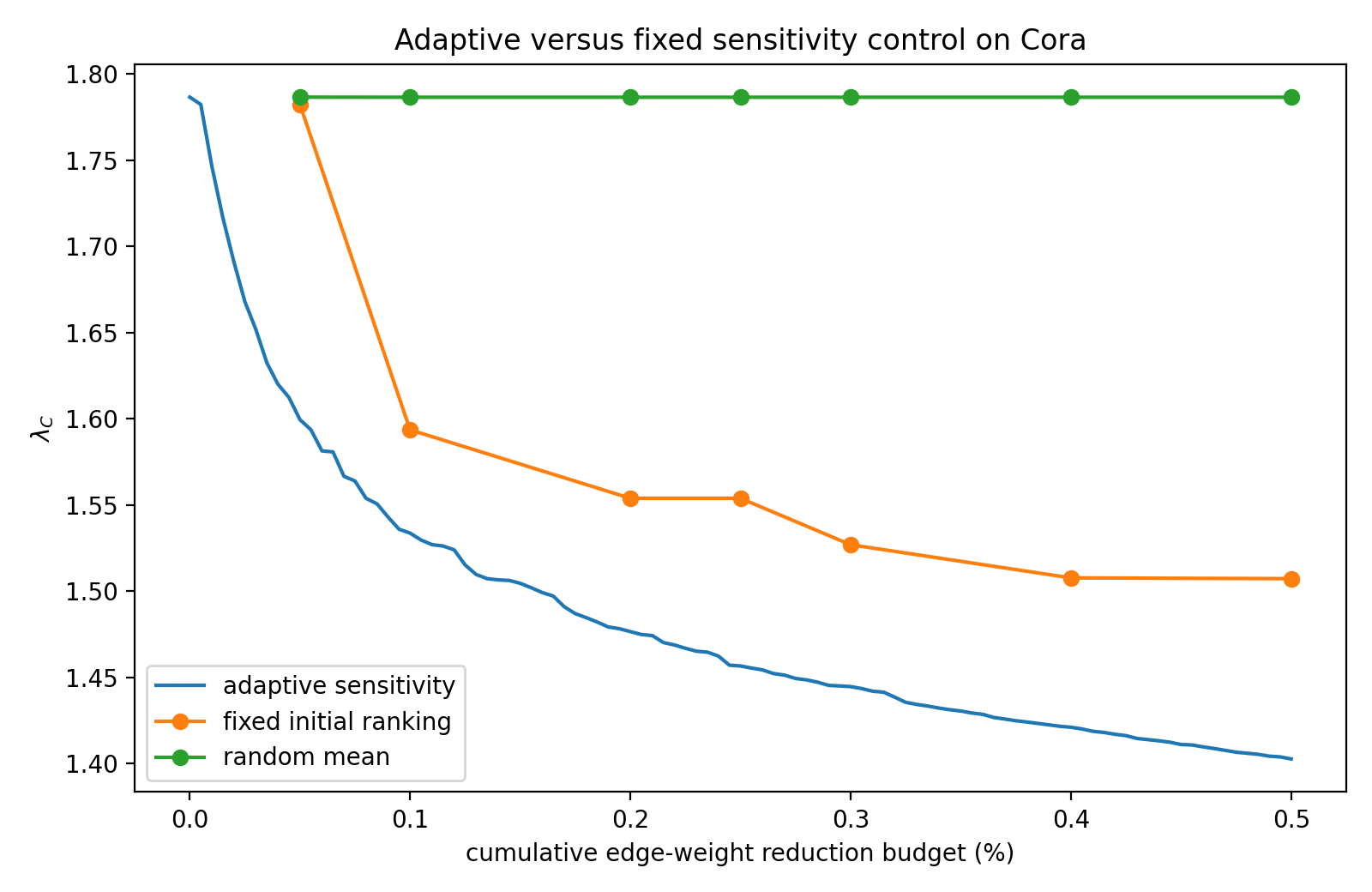}
	\caption{Cora spectral level under adaptive, fixed-ranking, and random
		edge interventions.}
	\alttext{Line plot of the Cora cone level versus cumulative edge-weight
		reduction budget. Adaptive sensitivity recomputation gives the
		largest decrease, followed by fixed ranking and the random baseline.}
	\label{fig:cora-adaptive-lambda}
\end{figure}
	
% ================================================================
\section{Discussion and conclusion}
\label{sec:discussion-conclusion}
% ================================================================

This work develops a learning and control realization of the two-sided cone
Rayleigh framework for trainable directed propagation operators.  The
nonsymmetric pencil
\[
B_\theta-\lambda G
\]
is treated directly, without replacement by a symmetric or Hermitian
surrogate, so that the distinct right and left modes retain the two-sided
geometry of directed propagation.

Building on the general cone minimax theory, the present work develops two
mechanisms tailored to learning and control.  First, computable lower and
upper bounds provide the certified enclosure
\[
\underline{\lambda}_\theta(u)
\leq
\lambda_C(B_\theta,G)
\leq
\overline{\lambda}_\theta(v),
\]
while smooth soft-min/max counterparts remain differentiable and retain
explicit one-sided approximation bounds.  Cone-level constraints can
therefore be incorporated into gradient-based learning without abandoning
a posteriori certification.

Second, for a simple interior level normalized by $v_C^TGu_C=1$,
\[
D\lambda_C(B)[H]=v_C^THu_C.
\]
Thus the same right--left pair that characterizes and certifies the current
operator also provides its local sensitivity map.  This leads to
first-order optimal graph-supported interventions under Frobenius and
$L^1$ budgets and, by recomputing the modes after each modification, to an
adaptive control strategy.

The numerical experiments illustrate both the interior and boundary regimes
of this construction.  Signed non-cone-preserving perturbations exhibit the
transition
\[
\text{interior eigenpair}
\longrightarrow
\text{boundary complementary quasi-pair},
\]
including common cone levels that no longer belong to the ordinary spectrum,
while finite-difference tests confirm the right--left sensitivity formula.
On directed stochastic block models, the prescribed cap is certified in all
$50$ realizations, with only small modifications of the learned operator and
no statistically significant change in test accuracy.  A separate
direction-only experiment further shows that symmetrization may remove
predictive information carried specifically by edge orientation.

On Cora, the learned operator admits a two-sided certificate with a gap of
order $10^{-12}$.  Under a cumulative edge-weight budget of $0.5\%$, adaptive
sensitivity recomputation reduces the distinguished spectral level by about
$21.5\%$ and substantially outperforms intervention based on a fixed initial
ranking.  For the trained classifier and split considered, the observed test
accuracy remains unchanged without retraining.  This demonstrates the value
of updating the local sensitivity map as the nonsymmetric propagation
operator evolves.

The framework also has clear limitations.  For strongly nonnormal operators,
control of $\lambda_C$ alone does not control transient amplification,
singular values, resolvent growth, or pseudospectral behavior.  The method
therefore certifies and controls a distinguished cone-relevant spectral, or
in the boundary regime quasi-spectral, quantity rather than the full
dynamics.  In particular, a common boundary cone level need not be an
ordinary generalized eigenvalue and must instead be interpreted through cone
inequalities and complementarity.

Natural extensions include trainable or state-dependent metrics $G_\theta$,
more general cones, reducible directed graphs, nonlinear propagation
operators, and objectives combining cone spectral information with measures
of nonnormal transient behavior.  Structured, sparse, and task-aware
intervention budgets also lead naturally to further optimization problems
driven by the right--left sensitivity map.

The main mechanism can be summarized as
\[
\boxed{
	\text{two-sided certification}
	\longrightarrow
	\text{directed sensitivity}
	\longrightarrow
	\text{adaptive spectral control}.
}
\]
For trainable nonsymmetric operators, this provides a direct variational
route from spectral characterization and certification to interpretable,
selective modification of learned directed propagation.

% ================================================================
\section*{Acknowledgements}
% ================================================================

ChatGPT (OpenAI) was used to assist with English-language editing,
manuscript organization and presentation, and computational implementation.
All AI-assisted text, mathematical statements, proofs, code, numerical
results, and references were independently reviewed and verified by the
authors, who take full responsibility for the scientific content.

% ================================================================
\section*{Data availability}
% ================================================================

The Cora citation-network data are publicly available from the source cited
in the manuscript.  Synthetic data were generated according to the models
and procedures described in the paper.  Code, scripts, and canonical
numerical outputs for reproducing the computational experiments are
available in the Zenodo reproducibility package
\cite{IlyasovValeevZenodo2026}, Version~1.0.0,
\href{https://doi.org/10.5281/zenodo.22107353}
{doi:10.5281/zenodo.22107353}.


\begin{thebibliography}{99}
	
	% ================================================================
	% Books and classical spectral theory
	% ================================================================
	
	\bibitem{Berman}
	A.~Berman and R.~J.~Plemmons,
	\emph{Nonnegative Matrices in the Mathematical Sciences},
	Classics in Applied Mathematics, vol.~9,
	Society for Industrial and Applied Mathematics,
	Philadelphia, PA, 1994.
	
	\bibitem{BirkhoffVarga1958}
	G.~Birkhoff and R.~S.~Varga,
	Reactor criticality and nonnegative matrices,
	\emph{J. Soc. Indust. Appl. Math.}
	\textbf{6} (1958), 354--377.
	
	\bibitem{DongZhangWangRQGNN}
	X.~Dong, X.~Zhang and S.~Wang,
	Rayleigh quotient graph neural networks for graph-level anomaly detection,
	in \emph{Proceedings of the Twelfth International Conference on Learning
		Representations (ICLR)}, 2024.
	
	\bibitem{Friedland1990}
	S.~Friedland,
	Characterizations of the spectral radius of positive operators,
	\emph{Linear Algebra Appl.}
	\textbf{134} (1990), 93--105.
	
	
	% ================================================================
	% Extended Rayleigh and minimax methods
	% ================================================================
	
	\bibitem{IlyasovValeevZenodo2026}
	Y.~S.~Il'yasov and N.~F.~Valeev,
	Reproducibility package for ``Cone Extended Rayleigh Quotients for Directed
	Graph Learning: Minimax Spectral Certificates, Sensitivity, and Adaptive
	Control'',
	Zenodo, Version~1.0.0, 2026,
	\url{https://doi.org/10.5281/zenodo.22107353}.
	
	\bibitem{Ilyasov2001}
	Y.~S.~Il'yasov,
	On positive solutions of indefinite elliptic equations,
	\emph{C. R. Acad. Sci. Paris S\'er. I Math.}
	\textbf{333} (2001), no.~6, 533--538.
	
	\bibitem{IlyasFunc2007}
	Y.~S.~Il'yasov,
	Bifurcation calculus by the extended functional method,
	\emph{Funct. Anal. Appl.}
	\textbf{41} (2007), no.~1, 18--30.
	
	\bibitem{IlIvan1}
	Y.~S.~Il'yasov and A.~A.~Ivanov,
	Computation of maximal turning points to nonlinear equations by nonsmooth
	optimization,
	\emph{Optim. Methods Softw.}
	\textbf{31} (2016), 1--23.
	
	\bibitem{ilyasov2021}
	Y.~S.~Il'yasov,
	Finding saddle-node bifurcations via a nonlinear generalized
	Collatz--Wielandt formula,
	\emph{Internat. J. Bifur. Chaos Appl. Sci. Engrg.}
	\textbf{31} (2021), no.~1, 2150008.
	
	\bibitem{IlyasJDE24}
	Y.~S.~Il'yasov,
	A finding of the maximal saddle-node bifurcation for systems of differential
	equations,
	\emph{J. Differential Equations}
	\textbf{378} (2024), 610--625.
	
	\bibitem{Ilyasov2026}
	Y.~S.~Il'yasov,
	On the minimax bifurcation formula,
	arXiv preprint arXiv:2605.17331, 2026.
	
	\bibitem{IlyasovValeev2024}
	Y.~S.~Il'yasov and N.~F.~Valeev,
	An extension of the Perron--Frobenius theory to arbitrary matrices and cones,
	\emph{Electron. J. Linear Algebra}
	\textbf{40} (2024), 788--802.
	
	\bibitem{IlyasovValeevPencils2026}
	Y.~S.~Il'yasov and N.~F.~Valeev,
	Cone minimax principles for non-selfadjoint operator pencils,
	arXiv preprint arXiv:2606.31129, 2026.
	
	\bibitem{IvanIlya}
	A.~A.~Ivanov and Y.~S.~Il'yasov,
	Finding bifurcations for solutions of nonlinear equations by quadratic
	programming methods,
	\emph{Comput. Math. Math. Phys.}
	\textbf{53} (2013), 350--364.
	
	
	% ================================================================
	% Graph learning and directed networks
	% ================================================================
	
	\bibitem{KipfWelling2017}
	T.~N.~Kipf and M.~Welling,
	Semi-supervised classification with graph convolutional networks,
	in \emph{Proceedings of the International Conference on Learning
		Representations (ICLR)}, 2017.
	
	\bibitem{MaEtAl2019}
	Y.~Ma, J.~Hao, Y.~Yang, H.~Li, J.~Jin and G.~Chen,
	Spectral-based graph convolutional network for directed graphs,
	arXiv preprint arXiv:1907.08990, 2019.
	
	\bibitem{PerlmutterEtAl2023}
	M.~Perlmutter, A.~Tong, F.~Gao, G.~Wolf and M.~Hirn,
	Understanding graph neural networks with generalized geometric scattering
	transforms,
	\emph{SIAM J. Math. Data Sci.}
	\textbf{5} (2023), no.~4, 873--898.
	
	\bibitem{RossiEtAl2024}
	E.~Rossi, B.~Charpentier, F.~Di Giovanni, F.~Frasca,
	S.~G\"unnemann and M.~M.~Bronstein,
	Edge directionality improves learning on heterophilic graphs,
	in \emph{Proceedings of the Second Learning on Graphs Conference},
	\emph{Proc. Mach. Learn. Res.}
	\textbf{231} (2024), 25:1--25:27.
	
	\bibitem{SenEtAl2008}
	P.~Sen, G.~Namata, M.~Bilgic, L.~Getoor, B.~Gallagher and T.~Eliassi-Rad,
	Collective classification in network data,
	\emph{AI Mag.}
	\textbf{29} (2008), no.~3, 93--106.
	
	\bibitem{ShiMalik2000}
	J.~Shi and J.~Malik,
	Normalized cuts and image segmentation,
	\emph{IEEE Trans. Pattern Anal. Mach. Intell.}
	\textbf{22} (2000), no.~8, 888--905.
	
	\bibitem{TongEtAl2020}
	Z.~Tong, Y.~Liang, C.~Sun, D.~S.~Rosenblum and A.~Lim,
	Directed graph convolutional network,
	arXiv preprint arXiv:2004.13970, 2020.
	
	\bibitem{vonLuxburg2007}
	U.~von Luxburg,
	A tutorial on spectral clustering,
	\emph{Stat. Comput.}
	\textbf{17} (2007), 395--416.
	
	\bibitem{ZhangMagNet2021}
	X.~Zhang, Y.~He, N.~Brugnone, M.~Perlmutter and M.~Hirn,
	MagNet: A neural network for directed graphs,
	in \emph{Advances in Neural Information Processing Systems},
	\textbf{34} (2021), 27003--27015.
	
	
	% ================================================================
	% Applications and minimax theory
	% ================================================================
	
	\bibitem{SalazarIlaysov2020}
	P.~D.~P.~Salazar, Y.~S.~Il'yasov, L.~F.~C.~Alberto,
	E.~C.~M.~Costa and M.~B.~Salles,
	Saddle-node bifurcations of power systems in the context of variational
	theory and nonsmooth optimization,
	\emph{IEEE Access}
	\textbf{8} (2020), 110986--110993.
	
	\bibitem{Sion1958}
	M.~Sion,
	On general minimax theorems,
	\emph{Pacific J. Math.}
	\textbf{8} (1958), 171--176.
	
\end{thebibliography}
\end{document}